\documentclass[letterpaper,journal]{IEEEtran}
\usepackage{iftex}
\ifPDFTeX
  \usepackage[T1]{fontenc}
\fi
\ifXeTeX
  \usepackage{fontspec}
\fi
\usepackage{amsmath,amsfonts}
\usepackage{algorithm}
\usepackage{array}
\usepackage[caption=false,font=normalsize,labelfont=sf,textfont=sf]{subfig}
\usepackage{textcomp}
\usepackage{stfloats}
\usepackage{url}
\usepackage{verbatim}
\usepackage{graphicx}
\usepackage{cite}

\usepackage{graphicx}%
\usepackage{multirow}%
\usepackage{amsmath,amssymb,amsfonts}%
\usepackage{amsthm}%
\usepackage{mathrsfs}%
\usepackage{xcolor}%
\usepackage{makecell}
\usepackage{booktabs}
\usepackage{textcomp}%
\usepackage{manyfoot}%
\usepackage{booktabs}%
\usepackage{algorithm}%
\usepackage{algorithmicx}%
\usepackage{algpseudocode}%
\usepackage{listings}%
\newtheorem{lemma}{Lemma}

\usepackage{amsmath, amssymb, amsthm}
\usepackage{tikz}
\usetikzlibrary{shapes.geometric, arrows.meta, positioning, fit, shadows.blur}

\newtheorem{theorem}{Theorem}

\begin{document}

\title{Euclidean Crowding–Driven Adaptive Euclidean–Hyperbolic Graph Learning}

\author{
Hao~Chen,
Jian~Sun,
Chuan~Shi,
and~Hongying~Zhang%
\IEEEcompsocitemizethanks{
\IEEEcompsocthanksitem
Hao Chen, Jian Sun, and Hongying Zhang are with the School of Mathematics and Statistics, Xi'an Jiaotong University, Xi'an 710049, China
(e-mail: haochen@stu.xjtu.edu.cn; jiansun@xjtu.edu.cn; zhyemily@mail.xjtu.edu.cn).

\IEEEcompsocthanksitem
Chuan Shi is with the Beijing Key Lab of Intelligent Telecommunications Software and Multimedia, Beijing University of Posts and Telecommunications, Beijing 100876, China
(e-mail: shichuan@bupt.edu.cn).

(Corresponding author: Hongying Zhang.)}
}


\maketitle
\begin{abstract}
Hyperbolic graph representation learning has emerged as a powerful paradigm for modeling hierarchical data. Unlike Euclidean space, which exhibits polynomial volume growth, hyperbolic space grows exponentially with radius, naturally accommodating tree-like structures without the geometric distortion known as Euclidean crowding. Despite strong empirical performance, the theoretical mechanisms underlying hyperbolic embeddings remain poorly understood. In this work, we provide a geometric quantifcation research for Euclidean crowding, a generalization-theoretic explanation for the superiority of hyperbolic space on hierarchical data, and further develop an adaptive Euclidean-hyperbolic fusion learning framework toward bridging this gap. Specifically, we \textbf{first} establish a geometric quantification of Euclidean crowding by analyzing layer-wise minimum separation in node embeddings, proving that fixed-dimensional Euclidean embeddings of tree-like graphs inevitably exhibit layer-wise crowding that leads to weaker margin-based generalization bounds. \textbf{Second}, we provide a generalization-bound for the superiority of hyperbolic geometry on hierarchical data. \textbf{Third}, we propose EH-Fusion, a geometry-adaptive framework that lifts Euclidean GNN representations into hyperbolic space via exponential mapping and adaptively fuses both geometries based on node-level neighborhood expansion patterns. Extensive experiments validate our theoretical insights and demonstrate the effectiveness of EH-Fusion on benchmark datasets.
\end{abstract}
\begin{IEEEkeywords}
Graph Learning, Hyperbolic Space, Hyperbolic Graph Neural Network, Generalization gap.
\end{IEEEkeywords}




\section{Introduction}




Euclidean geometry has long been the dominant representation space in machine learning\cite{bert,vit,clip}, due to its strong correspondence with human spatial intuition and well-established mathematical foundations. In particular, graph representation learning has predominantly relied on Euclidean embeddings\cite{gcn,gat}. However, an increasing body of evidence suggests that Euclidean geometry may not align well with the latent structures of many real-world datasets.

This mismatch is particularly evident in hierarchical or tree-like data, which arise in a wide range of domains, including concept taxonomies\cite{concept-tax,NIPS2017_poincare}, fine-grained image\cite{hyper-fine-grained}, RNA-seq data\cite{cellRNA-poincare}, protein interaction data\cite{proterin-hierarchical} and biological classification systems\cite{biological-hyper}. Such data typically exhibit exponential branching patterns. 
In contrast, the volume of Euclidean space grows only polynomially, leading to a fundamental geometric mismatch. As a result, Euclidean embeddings are prone to the so-called \emph{crowding phenomenon}\cite{crowding-phenomenon-acl2023}, where nodes at deeper layers become increasingly close to each other.  

 Hyperbolic geometry provides a natural alternative. Due to its exponential volume growth, hyperbolic space matches the expansion of hierarchical structures and admits low-distortion embeddings of tree-like data\cite{NIPS2017_poincare}. 
 In addition, prior studies~\cite{lgcn,hgat,shi} have suggested that hyperbolic space can also serve as a natural latent space for graphs with scale-free (power-law) structures and for heterogeneous information networks that exhibit similar power-law properties.
This observation has motivated a large body of work on deep hyperbolic representation learning across computer vision \cite{Yan_2021_CVPR,hcl,hyper_seg}, natural language processing \cite{NIPS2017_poincare,bert_probing}, and multi-modal learning \cite{pmlr-v202-hclip}. 
Beyond representation learning, hyperbolic geometry has also been leveraged for graph generation. Most recently, Bu et al. \cite{ggball} proposed GGBall, a hyperbolic generative framework operating in the Poincaré ball, which combines a hyperbolic autoencoder with Riemannian flow to generate hierarchical graphs in a one-shot manner. Their empirical results demonstrate that hyperbolic latent spaces significantly improve generation quality on tree-like and molecular graphs, further confirming the practical advantages of hyperbolic geometry.
\textbf{Despite these empirical successes, the mechanisms underlying the advantages of hyperbolic geometry and the intrinsic limitations of Euclidean geometry in modeling hierarchical data remain underexplored.} 
 Existing explanations are largely based on global distortion\cite{distortion,distortion_lilun} or qualitative geometric intuition\cite{NIPS2017_poincare}, and do not provide a quantitative characterization of how Euclidean crowding arises or how it affects representation learning.

\textbf{In this work, we provide a geometric analysis of hierarchical graph embeddings in Euclidean spaces.} Rather than analyzing global metric distortion, we study the \emph{layer-wise minimum separation} between node embeddings, providing a complementary perspective to the distortion-based theories of hyperbolic embeddings.
 Our analysis shows that, under fixed embedding dimensions, Euclidean representations of tree-like graphs inevitably exhibit \emph{layer-wise crowding}, where the separation between nodes within the same layer shrinks as depth increases. 
Furthermore, we show that  Euclidean hierarchical embeddings induce an intrinsic geometric tension between preserving parent–child distances and maintaining separation between non-adjacent nodes. Enforcing separation between non-adjacent nodes results in increased radial distances and stretching parent–child edges, whereas preserving tree-path distances inevitably compresses the separation between non-adjacent nodes. This phenomenon gives rise to a fundamental \emph{stretch–squeeze trade-off}, revealing a structural limitation of Euclidean embeddings for tree-like graphs.

\textbf{Beyond the geometric perspective, we establish a direct connection between representation geometry and learning theory.} By linking the separation gap between positive and negative pairs to the functional margin of pairwise distance-based models, we show that Euclidean crowding reduces the achievable margin and consequently leads to weaker margin-based generalization guarantees. \textbf{To our knowledge, this is the first work that quantitatively characterizes layer-wise crowding in  Euclidean embeddings and connects it to generalization behavior.}

\begin{figure*}[htb]
	\centering
	\includegraphics[width=\linewidth]{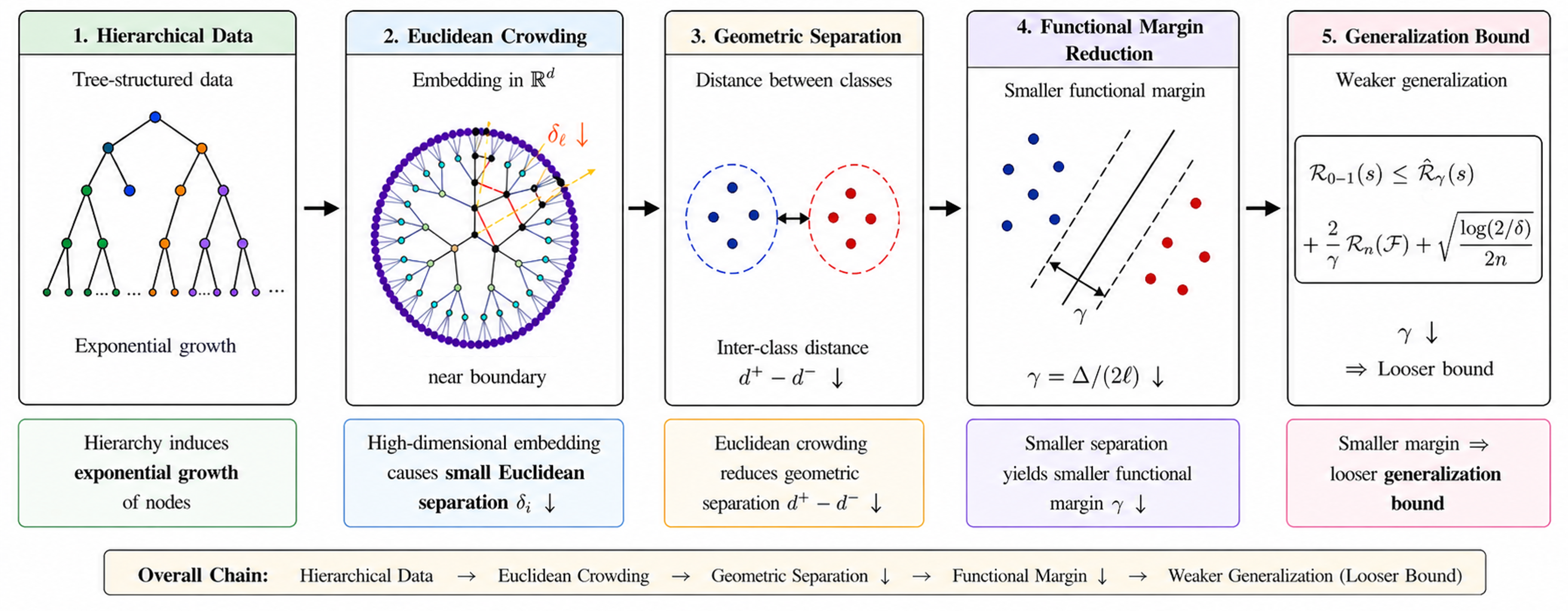}
    \caption{Euclidean crowding and its connection to generalization. In hierarchical data, the exponential growth of nodes with depth compresses pairwise distances in Euclidean space, reducing the geometric separation $\Delta$ and the induced functional margin $\gamma$. This margin reduction leads to a looser margin-based generalization bound, whereas hyperbolic space can better preserve hierarchical distances with lower distortion.}
\end{figure*}

Motivated by this observation, we first introduce a lightweight Euclidean-to-hyperbolic mapping module (GnnE2H), which lifts Euclidean graph representations into hyperbolic space, such as the Lorentz model or the Poincaré ball, via a simple exponential mapping layer. Experimental results on the strongly hyperbolic dataset Disease demonstrate that adding a single hyperbolic mapping layer is sufficient to unlock the advantages of hyperbolic geometry. Specifically, when applied to GCN, GCN2L achieves 94.9\% AUC, outperforming the tangent-space approximation method HGCN (89.9\%) by a substantial margin. When applied to GAT, GAT2L reaches 97.5\% AUC, surpassing the fully hyperbolic network FHNN (96.0\%) by 1.5 percentage points. These results not only support our theoretical analysis but also align with the principle of Occam's razor, yielding a lightweight yet effective architecture that captures the benefits of hyperbolic geometry without requiring a fully hyperbolic network.

Moreover, recent studies have shown that real-world graphs often exhibit mixed geometric characteristics that cannot be fully described by a single manifold such as hyperbolic space\cite{hyperbolic-mix}. This challenge reflects an underlying geometric mismatch: the advantages of hyperbolic space diminish for graphs with weaker hyperbolicity or less tree-like structures, where imposing a fixed hyperbolic space may lead to performance degradation. To address this issue, CurvGAD \cite{curvgad} proposes a mixed-curvature graph autoencoder that explicitly models geometric irregularities through curvature-aware encoding. In particular, it incorporates Ollivier--Ricci curvature, along with curvature reconstruction and Ricci-flow regularization to discern geometry-dependent patterns. \textbf{While effective, such an explicit curvature modeling strategy relies on curvature estimation and curvature-driven reconstruction, which introduces additional computational overhead.}
To address the risk of geometric mismatch and implicitly capture structural properties for mixed-manifold modeling, building upon the GnnE2H mapping module, we further propose EH-Fusion, an adaptive Euclidean--hyperbolic fusion framework that leverages curvature via node-level neighborhood expansion patterns to dynamically balance Euclidean and hyperbolic geometries with low overhead and high portability.

Our method exploits the complementary roles of Euclidean and hyperbolic geometries in modeling graphs with varying degrees of hierarchy. Hyperbolic space provides a natural inductive bias for tree-like graphs through its exponential expansion property, whereas Euclidean space remains a reliable alternative when hierarchical structure is weak or less pronounced. In contrast, positively curved spherical geometry is more suitable for structures dominated by cycles or cliques, which fall outside the flat-to-hierarchical regime considered in this work. Therefore, we focus on Euclidean--hyperbolic complementarity for hierarchy-aware graph learning.

Overall, this work provides a unified geometric and learning-theoretic perspective on graph representation learning for hierarchical data. By quantifying layer-wise crowding in Euclidean embeddings for tree-like data and linking it to margin-based generalization, we offer a principled explanation for the advantages of hyperbolic representations and derive models from these theoretical insights.

Our main contributions are summarized as follows:

\begin{itemize}
\item \textbf{Quantifying Euclidean crowding in hierarchical representations.}
We show that fixed-dimensional Euclidean representations inevitably suffer from layer-wise crowding when embedding tree-like graphs, by providing a quantitative geometric characterization of this phenomenon. Our analysis further reveals an intrinsic stretch--squeeze trade-off between preserving parent--child distances and maintaining separation among non-adjacent nodes.

\item \textbf{Linking Euclidean crowding to margin-based generalization.}
We establish a principled connection between Euclidean crowding and margin-based generalization  by relating layer-wise separation to the functional margin of distance-based models. Our analysis shows that Euclidean crowding inherently limits the achievable margin, leading to weaker generalization guarantees.

\item \textbf{Geometry-Adaptive Graph Learning via Euclidean--Hyperbolic Fusion.}
Motivated by the above insights, we propose EH-Fusion, an adaptive Euclidean--hyperbolic framework that dynamically balances  Euclidean and hyperbolic geometries according to node-level neighborhood expansion patterns, enabling robust learning across graphs with heterogeneous geometric structures.
\end{itemize}


\section{Related Work}

\subsection{Deep Hyperbolic Representation Learning}
Hyperbolic geometry has recently attracted increasing attention in deep representation learning for modeling hierarchical structures. In visual representation learning, Yan et al.~\cite{Yan_2021_CVPR} introduced hyperbolic metrics and hierarchical clustering to capture multi-level semantic relations, addressing the difficulty of modeling latent hierarchical structures in tree-like data. In language representation learning, Nickel and Kiela~\cite{NIPS2017_poincare} proposed Poincar\'e embeddings, demonstrating that hyperbolic geometry effectively encodes latent hierarchical structures with enhanced representational capacity and structural fidelity. In multimodal representation learning, Desai et al.~\cite{pmlr-v202-hclip} developed a hyperbolic image--text contrastive framework, leveraging the geometric properties of hyperbolic space to model multi-level semantic organization in vision--language representations.

A common design paradigm in these approaches is to introduce a hyperbolic realization layer on top of Euclidean backbone models to leverage the metric advantages of hyperbolic space for hierarchical representation, while the underlying architectures remain Euclidean.
This paradigm suggests that introducing a simple exponential mapping layer enables deep Euclidean encoders to be effectively optimized under hyperbolic metrics, yielding high-quality hyperbolic embeddings. These observations indicate that the effectiveness of hyperbolic representations largely stems from the  metric geometry itself.

\subsection{Hyperbolic Graph Representation Learning}
\textbf{Recent advances in hyperbolic graph representation learning have followed a progressively deepening geometric modeling paradigm.} HGNN~\cite{hgnn} introduced a tangent-space learning framework, where graph aggregation is performed in the Euclidean tangent space, followed by projection onto hyperbolic manifolds via exponential mappings. Building upon this idea, HGCN~\cite{hgcn} instantiated this framework with GCN-style architectures while preserving inductive capability. Subsequently, FHNN~\cite{fhnn} proposed a fully hyperbolic framework, where feature transformations, neighborhood aggregation, and nonlinearities are defined entirely within hyperbolic manifolds. By performing all operations directly in hyperbolic space, FHNN achieves stronger geometric consistency and improved expressiveness for hierarchical graph structures. Overall, hyperbolic graph representation learning has evolved from tangent-space approximated GNNs to intrinsic hyperbolic graph operations, reflecting a transition from geometry-assisted representations to fully geometry-consistent ones.

\textbf{Taken together, existing studies reveal two distinct paradigms of hyperbolic modeling.} In deep representation learning, hyperbolic geometry is introduced via a mapping layer that lifts Euclidean representations onto hyperbolic space and enables optimization under hyperbolic  metrics. In graph neural networks, recent studies have explored deeper integration of hyperbolic geometry throughout the entire network architecture.
\textbf{This naturally raises the questions of what fundamentally drives the effectiveness of hyperbolic representations and what principles can guide our selection among different pathways.} In this work, through margin-based generalization analysis and extensive experiments, we show that much of this advantage stems from the underlying metric geometry.
 

\subsection{Geometric Perspective on Euclidean Crowding}

Hyperbolic geometry is widely recognized as a natural representation space for hierarchical or tree-like data. This advantage is often attributed to the exponential volume growth of hyperbolic space, which aligns with the exponential expansion of hierarchical structures. In contrast, the polynomial volume growth of
Euclidean space creates a geometric mismatch, leading to the
crowding phenomenon\cite{NIPS2017_poincare,crowding-phenomenon-acl2023,crowding-phenomenon-2023KDD,crowding1} among nodes in deeper layers.
Several studies have established the theoretical advantages of hyperbolic embeddings. In particular, the notion of $\delta$-hyperbolicity introduced by Gromov characterizes the tree-likeness of graphs, where trees are 0-hyperbolic in the sense of Gromov and therefore naturally align with hyperbolic geometry \cite{Gromov1987}. Moreover, tree metrics admit embeddings into the hyperbolic plane with only $O(1)$ distortion, whereas Euclidean embeddings typically incur $O(\log n)$ distortion \cite{distortion_lilun}.
Nevertheless, existing studies primarily focus on global distortion properties, while the mechanisms underlying this geometric mismatch and the resulting crowding phenomenon remain underexplored. In particular, how Euclidean crowding arises in hierarchical representations and how it affects learning performance has not been quantitatively characterized.
 In this work, we provide a complementary geometric perspective by analyzing the layer-wise minimum separation between node embeddings. Inspired by sphere packing arguments\cite{kabatiansky1978bounds,cohn2012sphere},which study the maximal number of points that can be placed within a sphere under a fixed minimum separation, we instead analyze the opposite scenario, where all nodes are embedded within the sphere and characterize how the minimum separation decays with tree depth. 
 We show that fixed-dimensional Euclidean embeddings of tree-like graphs inevitably exhibit layer-wise crowding. Furthermore, by relating geometric separation to the functional margin of pairwise distance-based models, we establish that such crowding leads to weaker margin-based generalization guarantees, providing a theoretical perspective on the advantages of hyperbolic representations for hierarchical data. 


\section{Preliminary}

\subsection{Riemannian Geometry}

\paragraph{\textbf{Manifold}}
An $n$-dimensional manifold $\mathcal{M}$ is a topological space that is locally homeomorphic to $\mathbb{R}^n$.
\paragraph{\textbf{Tangent Space}}
For any point $x \in \mathcal{M}$, the tangent space $T_x\mathcal{M}$ is the vector space of all tangent vectors at $x$,and is isomorphic to $\mathbb{R}^n$.
\paragraph{\textbf{Riemannian Manifold}}
A Riemannian manifold $(\mathcal{M}, g)$ is a smooth manifold equipped with a Riemannian metric $g = (g_x)_{x \in \mathcal{M}}$, where each $g_x : T_x\mathcal{M} \times T_x\mathcal{M} \rightarrow \mathbb{R}$ is an inner product varying smoothly with $x$.
This metric induces geometric notions such as length and curvature.
\paragraph{\textbf{Geodesics and Distance}}
Let \( \mathcal{M} \) be a Riemannian manifold, and let \( \gamma : [a, b] \to \mathcal{M} \) be a piecewise smooth curve on the manifold. The length of the curve \( \gamma \) is defined as: \(
L(\gamma) = \int_a^b \|\dot{\gamma}(t)\|_g \ dt\).  For any two points \(x, y \in \mathcal{M}\), the geodesic distance between them is defined as: \(
d_g(x, y) = \inf \left\{ L(\gamma) \right\} \), where $\gamma(a) = x \text{ and } \gamma(b) = y$. The geodesic distance generalizes the notion of Euclidean distance  to curved Riemannian manifolds, where a geodesic is defined as the shortest path between two points. In particular, for certain manifolds such as hyperbolic manifolds, geodesics can be expressed in a closed  form.
\paragraph{\textbf{Exponential and Logarithmic Maps}}
For each point \( x \in \mathcal{M} \) and vector \( v \in T_x\mathcal{M} \), there exists a unique geodesic \( \gamma_v \) such that \( \gamma_v(0) = x \) and \( \dot{\gamma}_v(0) = v \). The Riemannian exponential map is defined as \(
\exp_x(v) = \gamma_v(1)
\), which maps the tangent vector \( v \)  to a point on the manifold \( \mathcal{M} \). The logarithmic map \( \log_x : \mathcal{M} \to T_x\mathcal{M} \) is the inverse of \( \exp_x \), which pulls  the points on the manifold back to the tangent space at \( x \).

\subsection{Hyperbolic Manifold}

A hyperbolic manifold is a Riemannian manifold with constant negative curvature, whose volume grows exponentially. This geometric property makes it an ideal embedding space for hierarchical data. Hyperbolic space is commonly represented by two isometric models, the Poincaré ball and the Lorentz model, both of which preserve its intrinsic geometry and are mutually transformable.

\paragraph{\textbf{Poincaré Space}}
The $d$-dimensional Poincaré ball model with negative curvature $-c\ (c>0)$ is denoted by $\mathcal{B}_c^d$, and its domain is given by: \(
\mathcal{B}_c^d = \left\{ x \in \mathbb{R}^d : \|x\|^2 < {1}/{c} \right\}
\), where $\|\cdot\|$ denotes the Euclidean norm. The origin in the Poincaré ball model corresponds to the Euclidean origin. The hyperbolic distance between two points \( g_u, g_v \in \mathcal{B} \) is defined as:
\begin{equation}
\small
    d_c^\mathcal{B}(g_u, g_v) =\frac{1}{\sqrt{c}} \operatorname{arcosh}\left( 1 + \frac{2c \| g_u - g_v \|^2}{(1 - c\| g_u \|^2)(1 - c\| g_v \|^2)} \right).
\end{equation}
For $x \in \mathcal{T}_0\mathcal{B}_c^d=\mathbb{R}^d$ and $z \in \mathcal{B}_c^d$, the exponential and the logarithmic map between the hyperbolic space $\mathcal{B}_c^d$  and the tangent space $\mathcal{T}_0\mathcal{B}_c^d$  is defined as  :
\begin{align}
exp_0^\mathcal{B}(x) = \tanh\left(\sqrt{c} \|x\|\right) \frac{x}{\sqrt{c} \|x\|} \\
log_0^\mathcal{B}(z) = \text{artanh} \left(\sqrt{c} \|z\|\right) \frac{z}{\sqrt{c} \|z\|}
\end{align}
where the  origin $\mathbf{0}$   is $(0,...,0) \in \mathcal{B}_c^d$.

\paragraph{\textbf{Lorentz Space}}
The  $d$-dimensional Lorentzian space with the negative curvature $-1/K\ (K>0)$ is denoted by \( \mathcal{L}_K^d \) , and its domain is given by \(
\mathcal{L}_K^d = \left\{ x \in \mathbb{R}^{d+1} \mid \langle x, x \rangle_\mathcal{L} = -K, x_0 > 0 \right\}
\), where the Lorentzian inner product is defined as \( \langle x, y \rangle_\mathcal{L} := -x_0 y_0 + \sum_{i=1}^{d} x_i y_i =-x_0 y_0 +x_{space} \cdot y_{space}\). The Lorentzian distance between two points \( x, y \in \mathcal{L}_K^d \) is given by:
\begin{equation}
d_K^\mathcal{L}(x, y) = \sqrt{K} \operatorname{arcosh}(-\langle x, y \rangle_\mathcal{L}/{K} ).
\end{equation}
For $v \in \mathbb{R}^d ,z \in \mathcal{L}_K^d$, the exponential and  the logarithmic map is defined as:

{\small
\begin{align}
 exp_0^\mathcal{L}(v)& = \left( \sqrt{K} \cosh \left( \frac{\|v\|}{\sqrt{K}} \right), \sqrt{K} \sinh \left( \frac{\|v\|}{\sqrt{K}} \right) \frac{v}{\|v\|} \right)\\   log_0^\mathcal{L}(z)& = \left( 0, \sqrt{K}  \text{arcosh} \left( \frac{x_0}{\sqrt{K}} \right)  \frac{x_{space}}{\|x_{space}\|} \right)
\end{align}
}
where the  origin $\mathbf{0}$  is   $(\sqrt{K},0,...,0) \in \mathcal{L}_K^d$.

\subsection{Metric Space}

A metric space \( (X, d) \) is a set \( X \) equipped with a \textit{distance function}  \( d: X \times X \to \mathbb{R} \) that satisfies the following  conditions for all \( x, y, z \in X \):

\begin{enumerate}

    \item {Non-negativity:} $d(x,y) \ge 0$.
    \item {Identity of indiscernibles:} $d(x,y)=0$ if and only if $x=y$.
    \item {Symmetry:} $d(x,y)=d(y,x)$.
    \item {Triangle inequality:} $d(x,z) \le d(x,y)+d(y,z)$.
\end{enumerate}

\textbf{Euclidean hyperball space.} The $d$-dimensional hyperball of radius $r$ is defined as \(
B^{d}(r) := \bigl\{ \mathbf{x} \in \mathbb{R}^d  : \|\mathbf{x}\| < r \bigr\},
\) which is equipped with the Euclidean $\ell_2$  distance metric  \(d_2(\mathbf{x},\mathbf{y}) := \|\mathbf{x}-\mathbf{y}\|_2, 
\forall\, \mathbf{x},\mathbf{y} \in B^{d}(r).\)
The  volume of  \(
B^{d}(r) \)  is given by \(
\mathrm{Vol} \big( B^d(r)\big) =\frac{\pi^{d/2} r^d}{\Gamma\left(1 + \frac{d}{2}\right)},
\) where $\Gamma(\cdot)$ denotes the Gamma function, satisfying $\Gamma(n)=(n-1)!$ for positive integers $n$.

\textbf{Tree metric space.} Let $T$ be a rooted tree of depth $L \ge 1$, where each internal node has  $k \ge 2$ children on average.  We denote by $|\mathcal{V}_l|$ the number of nodes at depth $l$. Edges connecting nodes at depth $i-1$ to depth $i$ are assigned positive weights $d_i$ with \(
d_1 \ge  d_2 \ge  \dots \ge d_L>0
\), reflecting increasing similarity among nodes at deeper levels.  We denote a node at depth $k$ by $v_{k,j_k}$, where $j_k$ indexes its position at  depth $k$.
For any two nodes $v_{k,j_k}$ and $v_{m,j_m}$, let $v_{h,j_h}$ denote their lowest common ancestor (LCA). Since $T$ is a tree, there exists a unique shortest path between $v_{k,j_k}$ and $v_{m,j_m}$ that ascends from $v_{k,j_k}$ to $v_{h,j_h}$ and then descends to $v_{m,j_m}$: \(
v_{k,j_k} \to v_{k-1,j_{k-1}} \to \dots \to v_{l,j_l} 
\to \dots \to v_{m-1,j_{m-1}} \to v_{m,j_m}.
\) The induced tree metric $d_T(\cdot,\cdot)$ is defined as the sum of edge weights along this path: \(
d_T\bigl(v_{k,j_k}, v_{m,j_m}\bigr) 
= \sum_{i=h+1}^{k} d_i \;+\; \sum_{i=h+1}^{m} d_i.
\) Specifically, let $u$ and $v$ be two leaf nodes at depth $l$ whose lowest common ancestor is the root. Then their tree distance is given by
\begin{equation}
d_T(u,v)=2\sum_{i=1}^{l} d_i .
\label{eq:app_tree_dist}
\end{equation}

\subsection{Gromov $\delta$-Hyperbolicity}

To quantify the hierarchical structure of graph data, we adopt Gromov $\delta$-hyperbolicity\cite{delta1,Gromov1987} as a metric characterization of tree-likeness. 
Unlike curvature, which describes local bending behavior, $\delta$-hyperbolicity captures how closely the global distance structure resembles a tree metric. 
This directly connects to our focus on hierarchical expansion and the corresponding benefits of hyperbolic representations.

\paragraph{\textbf{Gromov $\delta$-Hyperbolicity}}
Let $(X,d)$ be a metric space. For any four points $x,y,z,t \in X$, define the three pairwise distance sums:
\begin{equation}
\begin{aligned}
S_1 &= d(x,y)+d(z,t),\\
S_2 &= d(x,z)+d(y,t),\\
S_3 &= d(x,t)+d(y,z).
\end{aligned}
\end{equation}
The metric space $(X,d)$ is said to be $\delta$-hyperbolic if, for every quadruple of points, the two largest values among $\{S_1,S_2,S_3\}$ differ by at most $2\delta$, i.e.,
\begin{equation}
\max\{S_1,S_2,S_3\}
-
\operatorname{mid}\{S_1,S_2,S_3\}
\le 2\delta ,
\end{equation}
where $\operatorname{mid}(\cdot)$ denotes the second-largest value.

Intuitively, $\delta$ measures how close a metric space is to a tree metric. A smaller $\delta$ indicates stronger hierarchical organization and greater tree-likeness, favoring hyperbolic geometry as an inductive bias. Conversely, a relatively large $\delta$ suggests weaker tree-like structure, where Euclidean geometry becomes a natural complement. In particular, trees are $0$-hyperbolic.

\paragraph{\textbf{Why $\delta$-Hyperbolicity Instead of Curvature}}
Although hyperbolic geometry is classically associated with negative curvature, curvature and Gromov $\delta$-hyperbolicity offer different geometric perspectives. Curvature characterizes local geometric behavior, such as local bending, geodesic divergence, or neighborhood connectivity patterns. In comparison, $\delta$-hyperbolicity provides a global geometric measure of tree-likeness and hierarchical organization. 
Specifically, it characterizes how closely the global distance structure of a metric space resembles a tree metric, thereby reflecting the strength of hierarchical structure in graph data.
The advantage of hyperbolic representations is closely related to the exponential expansion property of hyperbolic space, which enables hierarchical structures to be represented with improved separation and reduced crowding. Therefore, our analysis focuses on whether the overall graph geometry exhibits strong global hierarchy and tree-like expansion, rather than relying only on the curvature behavior of local regions.
For this reason, we use Gromov $\delta$-hyperbolicity as a structural indicator for assessing whether graph data are likely to benefit from hyperbolic representation learning. 


\subsection{Problem Setting and Learning Objective}

We consider an undirected graph $G=(\mathcal V,\mathcal E)$ , where each node $i\in\mathcal V$ is associated with an initial Euclidean feature vector $x_i^{0,E}\in\mathbb R^d$, and each edge $(u,v)\in\mathcal E$ represents to a direct parent--child (is-a) relation. The goal of hyperbolic graph representation learning is to learn an mapping
$f:G\rightarrow\mathbf Z\in\mathbb H^{|\mathcal V|\times d}$ that preserves the hierarchical structure of the graph.
 
To evaluate how well the learned embeddings preserve hierarchical structure, we employ a distance-based link prediction task, which directly examines whether parent–child pairs are embedded closer than non-adjacent pairs.
The positive pair set $\mathcal D^+\subset\mathcal V\times\mathcal V$  with label $y_{uv}=1$ is given by all node pairs $(u,v)\in\mathcal E$, corresponding to direct parent--child relations. The negative pair set $\mathcal D^-\subset\mathcal V\times\mathcal V$  with label $y_{uv}=-1$ consists of non-adjacent node pairs, including sibling nodes sharing the same parent and nodes from different branches of the hierarchy. The negative pair separation is defined as \(
d^- = \inf_{(u,v)\in\mathcal D^-} d(u,v).
\) 

Given hyperbolic embeddings $\mathbf z_u,\mathbf z_v \in \mathbb H$, prediction is performed based on the hyperbolic distance $d_{\mathcal H}(\mathbf z_u,\mathbf z_v)$, where smaller distances correspond to higher probabilities of edge existence. 
To model this relationship, we employ the same Fermi--Dirac decoder as  previously used to compute connection probabilities. Specifically, the scoring function is defined as
\begin{equation}
s(u,v) = r - \frac{d(u,v)}{t},
\end{equation}
and the corresponding probability is given by
\begin{equation}
\mathrm{prob}(u,v) = \sigma\!\bigl(s(u,v)\bigr)
= \frac{1}{1 + \exp\!\bigl(-s(u,v)\bigr)} .
\end{equation}



\section{Geometric Crowding and Generalization}

In this section, we develop a geometric framework that connects embedding geometry to generalization guarantees for hierarchical data. We first demonstrate that fixed-dimensional Euclidean embeddings inevitably suffer from layer-wise crowding when representing tree-structured data. This phenomenon induces a fundamental stretch–squeeze trade-off between preserving tree distances and maintaining within-layer separation. By relating geometric separation to the functional margin of pairwise distance-based models, we further reveal how Euclidean crowding weakens margin-based generalization guarantees. In contrast, hyperbolic geometry preserves separation more effectively and thus admit stronger generalization guarantees for near-tree structured data.

\subsection{Euclidean Crowding in Hierarchical Representations}

We begin by analyzing the inherent limitations of Euclidean geometry when embedding tree-structured data. The key tension arises from the \emph{exponential growth} of nodes in a tree versus the \emph{polynomial volume growth} of Euclidean hyperballs.
As the tree depth increases, this mismatch forces representations from different branches to become increasingly crowded.

The following theorem~\ref{thm:crowding_tradeoff} formally characterizes this phenomenon by analyzing the layer-wise minimum separation among Euclidean embeddings of tree nodes.

\begin{theorem}[Layer-wise crowding and stretch--squeeze trade-off]
\label{thm:crowding_tradeoff}
Let $T$ be a rooted tree with branching factor $k$ and let
$f:T\to\mathbb R^d$ be an embedding with $f(\mathrm{root})=0$.
Let
\(
\delta_l :=
\min_{u,v\in\mathcal V_l}
\|f(u)-f(v)\|_2
\)
be the minimum Euclidean separation among node embeddings at depth $l$.
For each level $i$, define the stretch factor
\(
\alpha_i :=
\max_{(u,v)\in\mathcal E_i}
\frac{\|f(u)-f(v)\|_2}{d_i},
\)
and let
\(
A_l=\sum_{i=1}^l \alpha_i d_i.
\)
Then there exists a constant $C$  such that
\begin{equation}
    \delta_l
\le
C\,(\sum_{i=1}^l \alpha_i d_i)\,k^{-l/d}.
\label{crowding_theorem}
\end{equation}

Consequently:

(i) If the embedding approximately preserves radial tree distances
($A_l \approx T_l$), then
\begin{equation}
   \delta_l \le  C\,(\sum_{i=1}^l d_i)\,k^{-l/d} \le C  d_1 l k^{-l/d}.
   \label{crowding_theorem1}
\end{equation}

In particular,  when ignoring the polynomial factor $O(l)$, we obtain \( \delta_l = \tilde O(k^{-l/d}),\)
indicating exponential crowding of nodes. 

(ii) Conversely, maintaining a uniform layer-wise separation
$\delta_l\ge m>0$ requires
\begin{equation}
   A_l \ge \frac{m}{C}k^{l/d}, 
   \label{crowding_theorem2}
\end{equation}

which implies exponential radial stretching of tree edges.

Together, these results reveal a fundamental stretch--squeeze trade-off
in Euclidean embeddings of hierarchical structures.
1Furthermore, the exponent $1/d$ indicates that higher embedding dimensions $d$ can partially mitigate this crowding effect.
\end{theorem}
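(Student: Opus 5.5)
The plan is a packing (volume) argument in $\mathbb R^d$.

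\textbf{Step 1: containment.} By the triangle inequality along the root-to-node path, every node $v\in\mathcal V_l$ satisfies
\[
\|f(v)\|_2\le\sum_{i=1}^l\|f(v_i)-f(v_{i-1})\|_2\le\sum_{i=1}^l\alpha_i d_i=A_l,
\]
using $f(\mathrm{root})=0$ and the definition of $\alpha_i$. Hence all $|\mathcal V_l|$ embeddings at depth $l$ lie in the closed ball of radius $A_l$.

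\textbf{Step 2: packing.} Since the pairwise distances within $\mathcal V_l$ are at least $\delta_l$, the open balls of radius $\delta_l/2$ centred at the points $f(v)$, $v\in\mathcal V_l$, are pairwise disjoint. They are all contained in the ball of radius $A_l+\delta_l/2$. Comparing volumes with the formula for $\mathrm{Vol}(B^d(r))$ from the preliminaries (the constant $\pi^{d/2}/\Gamma(1+d/2)$ cancels) gives
\[
|\mathcal V_l|\,(\delta_l/2)^d\le (A_l+\delta_l/2)^d .
\]
To absorb the $\delta_l/2$ on the right, note that $\delta_l\le 2A_l$, because any two points in a ball of radius $A_l$ are at distance at most $2A_l$. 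Therefore $A_l+\delta_l/2\le 2A_l$, and so
\[
\delta_l\le 4A_l\,|\mathcal V_l|^{-1/d}.
\]
With branching factor $k$ we have $|\mathcal V_l|=k^l$ (or $\ge c\,k^l$ in the averaged case, which only changes the constant). This yields \eqref{crowding_theorem} with $C=4$, or $C=4c^{-1/d}$.

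\textbf{Step 3: consequences.} For (i), set $\alpha_i\approx1$. Then $A_l\approx\sum_{i=1}^l d_i\le l\,d_1$ by the monotonicity $d_1\ge\cdots\ge d_L$, which gives \eqref{crowding_theorem1} and hence $\delta_l=\tilde O(k^{-l/d})$. For (ii), substitute $\delta_l\ge m$ into \eqref{crowding_theorem} and rearrange to obtain $A_l\ge (m/C)k^{l/d}$, i.e.\ \eqref{crowding_theorem2}. The remark about dimension follows directly from the exponent $-l/d$.

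\textbf{Main obstacle.} The only delicate point is the boundary effect in the packing step, namely that the small balls may protrude beyond radius $A_l$. The bound $\delta_l\le 2A_l$ handles this cleanly. I would also make precise how an average branching factor $k$ translates into $|\mathcal V_l|\gtrsim k^l$, stating it as an assumption $|\mathcal V_l|\ge c\,k^l$ if necessary.
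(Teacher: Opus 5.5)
Your proposal is correct and follows the paper's proof essentially step for step. Both arguments place $f(\mathcal V_l)$ inside $B(0,A_l)$ via the root path, pack disjoint balls of radius $\delta_l/2$ into $B(0,A_l+\delta_l/2)$, absorb the boundary term using $\delta_l\le 2A_l$ to get $\delta_l\le 4A_l|\mathcal V_l|^{-1/d}$ (so $C=4$), and derive (i) and (ii) by substitution. The only cosmetic difference is that the paper defines $k$ as the geometric mean of the per-level branching factors, so $|\mathcal V_l|=k^l$ holds exactly, whereas you allow $|\mathcal V_l|\ge c\,k^l$ and absorb $c$ into $C$.
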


\paragraph{Proof sketch.}
The proof uses a geometric packing argument.

At depth $l$, place a Euclidean ball of radius $\delta_l/2$ around each
embedded node $f(v)$ for $v\in\mathcal V_l$. These balls are pairwise
disjoint by definition of $\delta_l$.

On the other hand, the cumulative stretch
$A_l=\sum_{i=1}^l\alpha_i d_i$ bounds the maximal radial extent of
layer-$l$ embeddings.
Therefore all balls lie inside a Euclidean ball of radius
$A_l+\delta_l/2$.

Comparing the volumes of the disjoint small balls with that of the
containing ball yields

\[
\delta_l \le C A_l |\mathcal V_l|^{-1/d}.
\]

Since $|\mathcal V_l|=k^l$, this gives
$\delta_l \le C A_l k^{-l/d}$.

If $A_l$ remains proportional to the intrinsic tree distance $T_l$,
the separation $\delta_l$ necessarily decays exponentially with depth,
producing layer-wise crowding.
Conversely, maintaining a constant separation requires $A_l$ to grow
exponentially, leading to radial stretch along tree paths.
The full proof is provided in Appendix~A.

\subsection{From Geometric Separation to Functional Margin}

Geometric separation is closely related to learning performance through its influence on the functional margin.
To formalize this relationship, we consider the distance-based prediction setting introduced in Section~3.

\begin{lemma}[Separation–margin relation]
\label{lem:sep_to_margin}
Consider a pairwise distance-based model with score
$
s(u,v)=\frac{r-d(u,v)}{t},
$ where $t>0$, and labels $y_{uv}\in\{+1,-1\}$.
Let $
d^+ := \sup_{(u,v)\in\mathcal D^+} d(u,v)
$ denote the maximum positive-pair distance, and define the geometric separation gap
$\Delta := d^- - d^+$. Then the maximal achievable pairwise functional margin satisfies
\begin{equation}
\sup_{r\in\mathbb R}\, \hat\gamma
\;=\;
\frac{\Delta}{2t}.
\end{equation}
\end{lemma}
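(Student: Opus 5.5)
We first make precise the functional margin, since the statement leaves it implicit. For a fixed offset $r$, the pairwise functional margin is the worst-case signed score over the labelled pairs:
\[
\hat\gamma(r)=\inf_{(u,v)\in\mathcal D^+\cup\mathcal D^-} y_{uv}\,s(u,v).
\]
With $s(u,v)=(r-d(u,v))/t$ we split this infimum into the positive and negative parts. On $\mathcal D^+$ we have $y_{uv}=+1$, so
\[
\inf_{\mathcal D^+}\frac{r-d(u,v)}{t}=\frac{r-d^+}{t},
\]
using $t>0$ and the definition of $d^+$ as a supremum. On $\mathcal D^-$ we have $y_{uv}=-1$, so
\[
\inf_{\mathcal D^-}\frac{d(u,v)-r}{t}=\frac{d^--r}{t},
\]
using the definition of $d^-$ as an infimum. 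Therefore
\[
\hat\gamma(r)=\frac{1}{t}\min\{\,r-d^+,\;d^--r\,\}.
\]

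The problem now reduces to maximizing $\min\{r-d^+,\,d^--r\}$ over $r\in\mathbb R$. The first argument is increasing in $r$ and the second is decreasing. Hence the minimum is maximized where the two are equal, at $r^\ast=(d^++d^-)/2$, and there each equals $(d^--d^+)/2=\Delta/2$.

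To make this rigorous without appealing to monotonicity, use the elementary bound $\min\{a,b\}\le (a+b)/2$. This gives
\[
\hat\gamma(r)\le \frac{(r-d^+)+(d^--r)}{2t}=\frac{\Delta}{2t}\quad\text{for every } r,
\]
and equality holds at $r^\ast$. So the supremum is attained and equals $\Delta/2t$.

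A few points need care. The infimum and supremum defining $d^\pm$ may not be attained on infinite pair sets, but the formulas for the partial infima still hold as written, so no attainment is needed. The argument also assumes both $\mathcal D^+$ and $\mathcal D^-$ are nonempty, so that $d^\pm$ are finite. The result remains valid when $\Delta\le 0$: the supremum is then nonpositive, meaning the classes cannot be separated by any threshold.

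The one real obstacle is the normalization. The preliminaries define the score as $r-d/t$, whereas the lemma uses $(r-d)/t$. I would work with the lemma's form, and remark that under the preliminaries' form, reparametrizing $r\mapsto r/t$ gives the same result.
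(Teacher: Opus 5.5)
Your proposal is correct and follows the same route as the paper. You split the margin over $\mathcal D^+$ and $\mathcal D^-$ to get $\hat\gamma(r)=\frac{1}{t}\min\{r-d^+,\,d^--r\}$, then maximize this concave piecewise-linear function at $r^\star=(d^++d^-)/2$. Two of your steps are small tightenings of the paper's argument: you obtain the partial infima as equalities from the $\sup/\inf$ definitions of $d^\pm$ (the paper only derives the lower bounds before asserting equality), and you certify optimality with $\min\{a,b\}\le (a+b)/2$.
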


This lemma establishes a quantitative relationship between geometric separation and the maximal achievable pairwise functional margin.
Reductions in the separation gap $\Delta$ are generally accompanied by a corresponding decrease in the maximal achievable margin, independent of model capacity or optimization details. Since margin-based generalization bounds typically scale inversely with the margin, geometric distortion directly leads to weaker generalization guarantees. Detailed proofs are provided in Appendix~A.2.

\subsection{Margin-Based Generalization Advantage of Hyperbolic Metrics}

Tree metrics admit embeddings into hyperbolic space with arbitrarily small distortion~\cite{distortion_lilun}, preserving geometric separation  even at large depths. In contrast, Theorem~4.1 shows that fixed-dimensional Euclidean embeddings inevitably suffer from layer-wise crowding when representing exponentially expanding hierarchical structures, which compresses the separation between distant nodes. Since the functional margin is determined by the geometric separation between positive and negative pairs (Lemma~\ref{lem:sep_to_margin}), this crowding effect leads to a reduced attainable margin in Euclidean space.
Combining  margin-based Rademacher complexity  bound~\cite{bartlett2002rademacher} with the separation--margin relation in Lemma~\ref{lem:sep_to_margin}, we obtain the following result.

\begin{theorem}[Hyperbolic margin-based generalization advantage]
\label{thm:hyperbolic_generalization}
Consider representation learning on near-tree structured data with a pairwise
score function $s(u,v)$. Let $\Delta = d^- - d^+$ denote the geometric separation
between negative and positive pairs, and let
\(
\hat\gamma=\frac{\Delta}{2t}
\)
be the maximal achievable functional margin (Lemma~\ref{lem:sep_to_margin}).

If the representations are learned in hyperbolic space $\mathbb H$ and
Euclidean space $\mathbb R^d$, respectively, then for tree-like structured data
\(
\Delta_{\mathbb H}>\Delta_{\mathbb R},
\)
which implies
\(
\gamma_{\mathbb H}>\gamma_{\mathbb R}.
\)
 in the margin-based generalization bound
\begin{equation}
\mathcal R_{0\text{-}1}(s)
\le
\widehat{\mathcal R}_{\gamma}(s)
+
\frac{2}{\gamma}\mathfrak R_n(\mathcal F)
+
\sqrt{\frac{\log(2/\delta)}{2n}}
\end{equation}
the hyperbolic model admits a smaller margin-dependent complexity term
\(
\frac{2}{\gamma_{\mathbb H}}\mathfrak R_n(\mathcal F)
<
\frac{2}{\gamma_{\mathbb R}}\mathfrak R_n(\mathcal F).
\)
Therefore, when empirical margin errors are comparable, models equipped
with hyperbolic metrics admit tighter margin-based generalization guarantees
than their Euclidean counterparts.
\end{theorem}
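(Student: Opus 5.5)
The proof proceeds in three steps: bound $\Delta$ in each geometry, turn the gaps into margins via Lemma~\ref{lem:sep_to_margin}, and substitute the margins into the standard margin bound. The bound itself is the Rademacher-complexity margin inequality of~\cite{bartlett2002rademacher}. It is applied to the $\gamma$-ramp loss composed with the score class $\mathcal F=\{(u,v)\mapsto y_{uv}s(u,v)\}$. We take $\mathcal F$ to be the same class, or at least to have the same Rademacher complexity $\mathfrak R_n(\mathcal F)$, in both geometries. This is the premise behind ``when empirical margin errors are comparable.'' Once that is fixed, the complexity term $\frac{2}{\gamma}\mathfrak R_n(\mathcal F)$ is strictly decreasing in $\gamma$. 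So the whole statement reduces to the strict inequality $\gamma_{\mathbb H}>\gamma_{\mathbb R}$. By Lemma~\ref{lem:sep_to_margin}, with a common temperature $t$, that is equivalent to $\Delta_{\mathbb H}>\Delta_{\mathbb R}$.

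\textbf{Euclidean side.} We use Theorem~\ref{thm:crowding_tradeoff}. We normalize so that positive (parent--child) pairs are preserved up to the stretch factors, giving $d^+_{\mathbb R}\ge \min_i d_i$ after scaling. Negative pairs include siblings at depth $l$, so $d^-_{\mathbb R}\le \delta_l\le C A_l k^{-l/d}$. For a depth-$L$ tree with bounded stretch ($A_L\lesssim d_1 L$), this yields
\[
\Delta_{\mathbb R}\le C d_1 L k^{-L/d}-d^+_{\mathbb R},
\]
which for $L$ large relative to $d\log_k(\cdot)$ is small or even negative. The scaling matters: the margin must be compared at a fixed scale of positive distances, and we fix $d^+$ equal in both geometries. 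Part (ii) of Theorem~\ref{thm:crowding_tradeoff} shows that the alternative, enlarging $d^-$, forces exponential stretch of $A_l$ and hence of $d^+$. So no rescaling escapes the bound on $\Delta_{\mathbb R}$.

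\textbf{Hyperbolic side.} We invoke the low-distortion embedding of trees into $\mathbb H^2$ (Sarkar-type construction, \cite{distortion_lilun}) with distortion $1+\varepsilon$. Scaled to the same $d^+$, it gives $d^-_{\mathbb H}\ge (1+\varepsilon)^{-1}\min_{(u,v)\in\mathcal D^-}d_T(u,v)$. Sibling pairs have tree distance $2d_l\ge 2d_L$, so $\Delta_{\mathbb H}\ge (1+\varepsilon)^{-1}2d_L-(1+\varepsilon)d_{1}$ up to the edge normalization. More sharply, we can use Sarkar's construction with edge lengths scaled so that siblings remain at distance bounded below independently of $L$. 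For near-tree data we would add the Gromov $\delta$-hyperbolicity slack: the Gromov tree approximation changes distances by $O(\delta\log n)$, which is absorbed as long as it is below the tree gap.

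\textbf{Main obstacle.} The hard step is making $\Delta_{\mathbb H}>\Delta_{\mathbb R}$ genuinely comparable and strict. The Euclidean bound only controls the minimum over one layer, and it is asymptotic in $L$ with an unspecified constant $C$. I would therefore state the comparison under explicit conditions: a common normalization of $d^+$, a depth $L$ beyond the threshold where $C d_1 L k^{-L/d}$ drops below the hyperbolic lower bound, and $\delta$ small. I would then conclude strictness from the exponential decay of the Euclidean bound versus the $L$-independent hyperbolic bound. Monotonicity of $\gamma\mapsto 2\mathfrak R_n/\gamma$ then finishes the argument.
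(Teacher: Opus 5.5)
Your skeleton is the paper's: the Bartlett--Mendelson bound applied to the $1/\gamma$-Lipschitz ramp loss, then Lemma~\ref{lem:sep_to_margin} to turn $\Delta$ into $\hat\gamma$, then Theorem~\ref{thm:crowding_tradeoff} for the Euclidean side and a low-distortion hyperbolic embedding for the other, and finally monotonicity of $2\mathfrak R_n(\mathcal F)/\gamma$. The paper never quantifies the comparison. It simply asserts $\Delta_{\mathbb R}<\Delta_{\mathbb H}$ from the distortion literature and Theorem~\ref{thm:crowding_tradeoff}, and it tacitly uses the same $\mathfrak R_n(\mathcal F)$ in both geometries. So your explicit hypotheses (common complexity, common scale, depth threshold, small $\delta$) sharpen the paper's argument rather than depart from it. Note, though, that equal Rademacher complexity is a separate assumption. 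It is not the content of ``comparable empirical margin errors,'' which concerns $\widehat{\mathcal R}_\gamma(s)$.

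In the part that goes beyond the paper, one step fails as written. The weighted bound $\Delta_{\mathbb H}\ge(1+\varepsilon)^{-1}2d_L-(1+\varepsilon)d_1$ is not $L$-independent, since it shrinks with $d_L$. It is also not positive in general: the exact tree metric already has $d^+_T=d_1$ and $d^-_T=2d_L$, so $\Delta_T=2d_L-d_1<0$ whenever $d_1>2d_L$. Your closing ``exponential decay versus an $L$-independent bound'' therefore needs your second variant: the Sarkar embedding with every edge of length $\tau$, which gives $d^+_{\mathbb H}=\tau$ and $\Delta_{\mathbb H}\ge\tau\,\tfrac{1-\varepsilon}{1+\varepsilon}$.

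On the Euclidean side you need no bounded-stretch assumption. By definition $\alpha_i d_i\le d^+_{\mathbb R}$, so $A_L\le L\,d^+_{\mathbb R}$. With the constant $C=4$ from the proof of Theorem~\ref{thm:crowding_tradeoff}, this gives $\Delta_{\mathbb R}\le d^+_{\mathbb R}\,(4Lk^{-L/d}-1)$ for every embedding. Comparing the scale-free ratios $\Delta/d^+$ removes your normalization worry and makes the depth threshold explicit.

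Finally, once $4Lk^{-L/d}<1$ this bound forces $\hat\gamma_{\mathbb R}<0$, so $2\mathfrak R_n(\mathcal F)/\gamma_{\mathbb R}$ is undefined. In that regime the correct conclusion is that no positive Euclidean margin exists, so the empirical margin errors cannot be comparable. Your monotonicity finish is needed, and meaningful, only when $\hat\gamma_{\mathbb R}>0$. The two bounds above guarantee strictness there only while $4Lk^{-L/d}<2/(1+\varepsilon)$.
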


Theorem~\ref{thm:hyperbolic_generalization} provides a principled geometric
explanation for the empirical success of hyperbolic representations: 
by preserving larger separation between positive and negative pairs, hyperbolic geometry enables larger functional margins and consequently stronger generalization guarantees on hierarchical data. Detailed derivations are provided in
Appendix~A.3.

\section{From Theory to Model: The EH-Fusion Framework}

Motivated by the theoretical analysis in Section~\ref{thm:crowding_tradeoff}, we propose a lightweight framework that exploits the metric advantages of hyperbolic geometry while remaining the flexibility of Euclidean graph encoders. Unlike fully hyperbolic architectures that replace all Euclidean operations, our approach introduces a minimal geometric realization layer and an adaptive mechanism for balancing between Euclidean and hyperbolic geometries based on local structural patterns. The proposed EH-Fusion framework consists of two main components: (i) a multi-geometry embedding module that maps shared Euclidean representations onto different manifolds via lightweight exponential mapping layers, and (ii) a node-wise adaptive fusion module that dynamically balances predictions from multiple geometries according to local neighborhood expansion patterns.


\begin{figure*}
    \centering
    \includegraphics[width=1\linewidth]{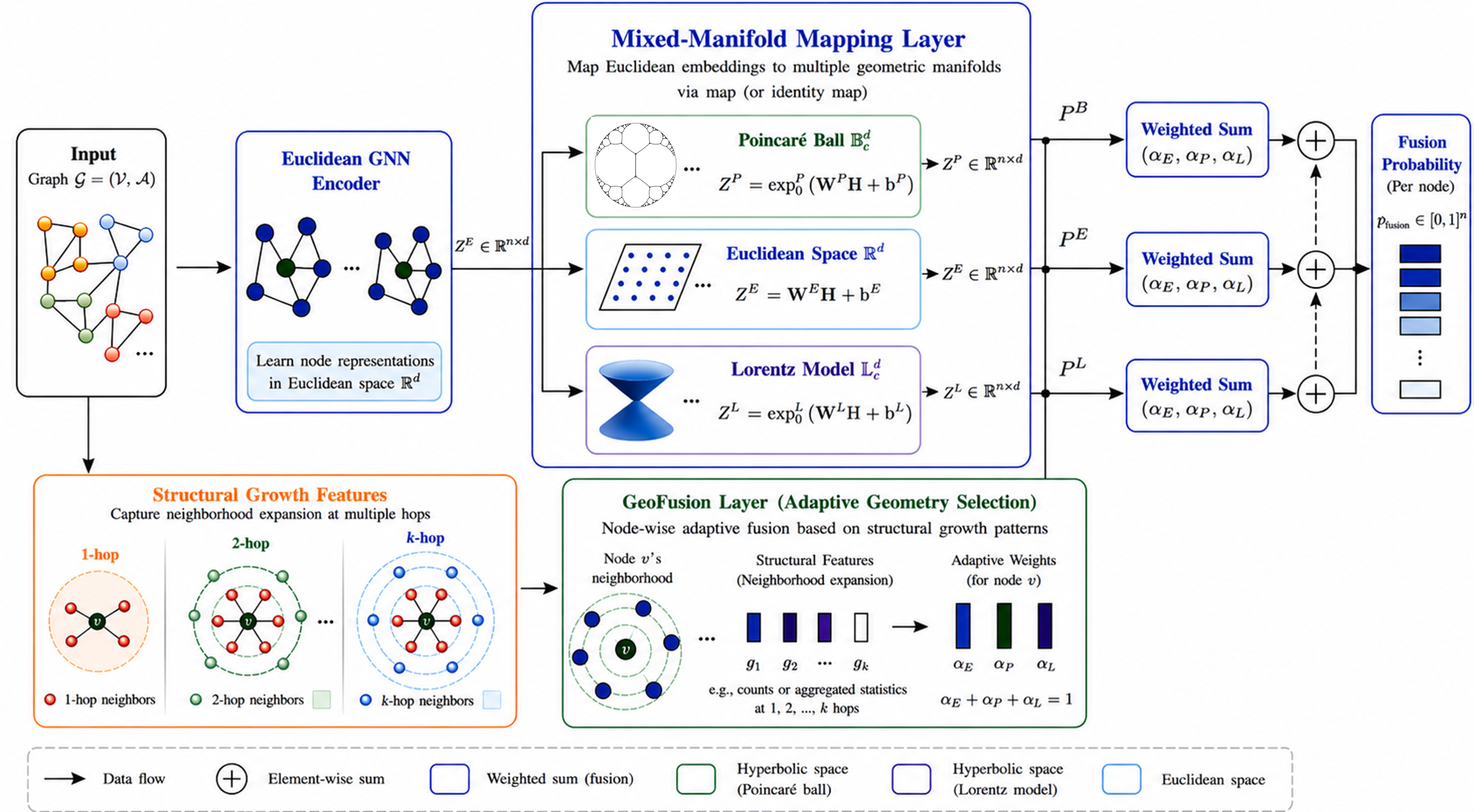}
    \caption{The overall framework of EH-Fusion: 
 \textbf{Upper part:} A shared Euclidean graph encoder yields node representations,  which are then mapped onto multiple geometric manifolds for metric learning.
\textbf{Lower part:} High-order degree growth information is extracted to capture local geometric characteristics and guide the adaptive aggregation of manifold-specific prediction probabilities. }
    \label{fig:framework}
\end{figure*}

Figure~\ref{fig:framework} provides an overview of the framework.
Starting from a shared Euclidean graph encoder,  
node representations are projected into three geometric spaces: Euclidean space $\mathbb{R}^d$, the Poincar\'e ball model $\mathbb{B}^d_c$, and the Lorentz model $\mathbb{L}^d_c$ of hyperbolic geometry. 
To handle graphs with heterogeneous geometric structures, we further introduce a topology-aware selection mechanism hat uses high-order degree statistics to compute node-specific mixing weights, enabling adaptive aggregation of predictions from different manifolds.

\subsection{Multi-Geometry Embedding via Lightweight Exponential Mapping layers}

Given a graph $G=(\mathcal V,\mathcal E)$ with node features $X$ and adjacency matrix $A$, we first obtain Euclidean node representations through a standard graph neural network:
\begin{equation}
\mathbf H = \mathrm{Encoder}_{\mathrm E}(X,A),
\qquad
\mathbf H \in \mathbb R^{|\mathcal V|\times d}.
\end{equation}
where the Euclidean embedding $\mathbf H$ serves as a shared latent representation and $\mathrm{Encoder}_{\mathrm{E}}(\cdot)$ can be any Euclidean GNN such as GCN or GAT. 
All message passing and feature aggregation are performed in Euclidean space,  allowing the model to leverage mature architectural designs and stable optimization schemes. From this share representation $\mathbf H$, we obtain embeddings in different geometric spaces via  geometry-consistent mappings.

\paragraph{\textbf{Euclidean Space Representation}}
The final Euclidean embeddings are obtained through a linear projection: 
\begin{equation}
    \mathbf Z^{\mathrm E} = \mathbf W^E  \mathbf H+ \mathbf b^E
\end{equation} 
 where $\mathbf W^{E}$ and $\mathbf b^{E}$ are learnable parameters. This projection is kept separate from the hyperbolic mappings to reduce interference during optimization.
 
\paragraph{\textbf{Poincar\'e Space Representation}}
To capture hierarchical relations, we map Euclidean representations to a Poincar\'e ball space via the exponential map at the origin:
\begin{equation}
\mathbf Z^{\mathrm P} = \exp^{\mathrm P}_{0}(\mathbf W^P \mathbf H+\mathbf b^P),
\end{equation}
where $\exp^{\mathrm P}_{0}(\cdot)$ denotes the exponential map of the Poincar\'e ball.
The Poinca\'e ball model provides an intuitive geometric interpretation
of hierarchical structures, where the radial coordinate roughly
corresponds to the hierarchy level, while angular separation reflects
similarity between sibling nodes.

\paragraph{\textbf{Lorentz Space Representation}}
In parallel, we also consider the Lorentz space model of hyperbolic geometry, which offers numerically stable distance computations and is convenient for large-scale graphs:
\begin{equation}
\mathbf Z^{\mathrm L} = \exp^{\mathrm L}_{0}(\mathbf W^L\mathbf H+\mathbf b ^L),
\end{equation}
where $\exp^{\mathrm L}_{0}(\cdot)$ is the exponential map defined on the Lorentz manifold.

The parameters $W^E, W^P, W^L$ and biases are learned independently for each geometry, enabling the model to learn geometry-specific representations.

\subsection{Node-wise Adaptive geometry Selection}

 Real-world graphs often exhibit mixed geometric structures: some nodes lie in regions with tree-like expansion patterns that are more naturally modeled by hyperbolic geometry, while others are better approximated by locally Euclidean geometry.
Enforcing a single global geometry across the entire graph is therefore suboptimal. To address this, we propose a data-driven fusion mechanism that adaptively selects the most suitable geometry for each node based on its local neighborhood expansion pattern.

The underlying principle is rooted in the volume growth properties of the two geometries. Euclidean space exhibits polynomial volume growth, whereas hyperbolic space grows exponentially. This distinction manifests in graph-structured data as differences in how a node's neighborhood expands with distance. Consequently, the pattern of local expansion serves as a structural signature that reveals which geometry better aligns with the node's latent structure.

\paragraph{\textbf{Extracting structural growth features}}
To capture this expansion behavior, we extract node-wise structural signals, including i-hop neighborhood degrees:
\begin{equation}
\mathbf S
=
\bigl[
\operatorname{diag}(A),
\operatorname{diag}(A^2),
\ldots,
\operatorname{diag}(A^k)
\bigr],
\end{equation}
where $\operatorname{diag}(A^i)$ denotes the number of length-$i$ closed walks rooted at given nodes.
Rapid growth in high-order degrees tend to exhibit tree-like or hierarchical structures, whereas slower growth is more consistent with locally Euclidean geometry.
\paragraph{\textbf{Learning node-wise geometry mixing weights}}
Based on the structural growth features, we introduce a lightweight attention mechanism that assigns each node a set of geometry-specific weights.
For node $v$,
its feature vector $\mathbf{S}_v$ is passed through a small multilayer perceptron $\phi(\cdot)$ followed by a softmax normalization:
\begin{equation}
\boldsymbol{\alpha}_v
=
\mathrm{Softmax}\!\left(
\phi(\mathbf S_v)
\right),
\qquad
\boldsymbol{\alpha}_v \in \mathbb R^{M},
\end{equation}
where $M = 3$ corresponds to the three geometric models: Euclidean ($\mathbb{R}^d$), Poincaré ball ($\mathbb{B}^d_c$), and Lorentz ($\mathbb{L}^d_c$). 
 $\phi(\cdot)$ is implemented as a multilayer perceptron with two hidden layers.
\paragraph{\textbf{Adaptive fusion of multi-geometry predictions}}
The final link probability is obtained as a convex combination of the geometry-specific probabilities: 
\begin{equation}
\mathbf z_v
=
\alpha_v^{\mathrm E}\,p_v^{\mathrm E}
+
\alpha_v^{\mathrm P}\, p_v^{\mathrm P}
+
\alpha_v^{\mathrm L}\, p_v^{\mathrm L}.
\end{equation}
This formulation allows each node to emphasize the geometry that best matches its local  growth pattern. Nodes in rapidly expanding, tree-like regions naturally assign higher weights to hyperbolic components, while nodes in flatter regions rely more on Euclidean representations.

\section{Experiment}

\subsection{Experimental setup}
 
\subsubsection{{Datasets}} 
We evaluate the proposed method on five graph learning benchmarks spanning citation networks, transportation networks, disease propagation graphs, and large-scale product co-purchase graphs. These datasets exhibit varying degrees of tree-likeness, which we quantify using Gromov $\delta$-hyperbolicity: smaller $\delta$ values indicate stronger hyperbolic (tree-like) characteristics.



\paragraph{\textbf{Cora\cite{cora}}}
Cora is a standard citation network dataset, where nodes represent scientific publications and edges denote citation relationships. Each node is associated with a high-dimensional bag-of-words feature vector.

\paragraph{\textbf{Citeseer\cite{citeseer}}}  The Citeseer dataset is another classic citation network with similar characteristics to Cora. Nodes represent scientific publications belonging to a set of distinct classes, and edges correspond to citation relationships. 

\paragraph{\textbf{Airport\cite{hgcn}}}  The Airport dataset models an air transportation network, where nodes correspond to airports and edges represent flight routes between airports. This graph exhibits real-world topology and structural diversity.

\paragraph{\textbf{Disease\cite{hgcn}}}  The Disease dataset represents a disease propagation network, where nodes denote individuals or entities and edges reflect potential transmission paths of a disease process. 


\paragraph{\textbf{Ele-Fashion\cite{mmgraph}}} Ele-Fashion is a node classification dataset constructed from Amazon-Fashion, where each node corresponds to a product image and edges indicate whether two products are co-purchased. The graph contains 97,766 nodes and 199,602 edges.


Together, these datasets provide a diverse evaluation  spanning different scales, domains, and levels of hyperbolicity, enabling a comprehensive assessment of the proposed framework.


\subsubsection{{Baselines}}

To comprehensively evaluate the effectiveness of our method, we compare against representative graph  learning approaches spanning both Euclidean and hyperbolic geometries.

\paragraph{\textbf{Poincaré Embeddings (PoincaréEmb)\cite{NIPS2017_poincare}}} Poincaré Embeddings learn node representations directly in hyperbolic space using the Poincaré ball model.  It has been widely adopted as a foundational hyperbolic embedding baseline.


\paragraph{\textbf{Graph Convolutional Network (GCN)\cite{gcn}}} GCN performs neighborhood aggregation through normalized adjacency-based propagation, followed by linear feature transformation. It serves as a Euclidean graph learning baseline.

\paragraph{\textbf{Graph Attention Network (GAT)\cite{gat}}}  GAT incorporates attention mechanisms into neighborhood aggregation, allowing nodes to adaptively weight contributions from different neighbors.
 

\paragraph{\textbf{Hyperbolic Neural Networks (HNN)~\cite{hgnn}}}
HNN extends classical neural networks to hyperbolic space by reformulating basic neural operations, including linear transformations, bias addition, nonlinear activation, and optimization, under Riemannian geometry. 

 
\paragraph{\textbf{Hyperbolic Graph Convolutional Networks (HGCN)~\cite{hgcn}}}
HGCN generalizes graph convolution to hyperbolic space by performing feature transformation and neighborhood aggregation through tangent-space approximation.
 


\paragraph{\textbf{Fully Hyperbolic Neural Network(FHNN)\cite{fhnn}}} FHNN is a fully hyperbolic graph neural network in which both feature transformations and aggregation operations are conducted entirely in lorentz hyperbolic space. 




\subsubsection{Implementation Details}

We tune the learning rate in $\{0.01, 0.008, 0.005, 0.001\}$, 
the L2 regularization coefficient in $\{0, 10^{-1}, 5\times10^{-2}, 10^{-2}, 5\times10^{-3}, 10^{-3}, 5\times10^{-4}, 10^{-4}\}$, 
and the dropout rate from $0.0$ to $0.7$ with a step size of $0.1$. 
All hyperparameters are selected based on validation performance.
For the image-attributed graph EF-Fashion, where each node is associated with visual features, we perform a grid search over 21 learning rates, ranging from $0.0001$ to $0.0009$, $0.001$ to $0.009$, and $0.01$ to $0.03$, and report the best-performing results. To fully exploit the visual modality, we evaluate both multimodal-aligned encoders (e.g., CLIP and ImageBind) and general-purpose visual encoders (e.g., DINO and ViT).
The latent dimension  $d$is set to $ \{8,16,32,64\}$ for all methods. Poincar\'e embeddings are optimized using Riemannian stochastic gradient descent (RSGD)~\cite{rsgd}, while GCN and GAT-based models are optimized using Adam~\cite{adam}. 
All GNN-based models adopt early stopping, where training is terminated if the validation performance does not improve for 100 consecutive epochs.
All experiments are conducted on an Intel Xeon Silver 4410Y CPU (2.0--3.9\,GHz) and an NVIDIA GeForce RTX 4090Ti GPU.

\begin{table*}[ht]
\centering
\small
\setlength{\tabcolsep}{2pt}
\renewcommand{\arraystretch}{1.2}
\caption{AUC(\%) for link prediction task.}
\begin{tabular}{
c
cccc @{\hspace{12pt}}
cccc @{\hspace{12pt}}
cccc @{\hspace{12pt}}
cccc
}
\toprule
\multirow{2}{*}{Model}
& \multicolumn{4}{c}{Disease ($\delta = 0$)}
& \multicolumn{4}{c}{Airport ($\delta = 1$)}
& \multicolumn{4}{c}{Citeseer ($\delta = 5$)}
& \multicolumn{4}{c}{Cora ($\delta = 11$)} \\

\cmidrule(lr){2-5}
\cmidrule(lr){6-9}
\cmidrule(lr){10-13}
\cmidrule(lr){14-17}

& 8 & 16 & 32 & 64
& 8 & 16 & 32 & 64
& 8 & 16 & 32 & 64
& 8 & 16 & 32 & 64 \\

\midrule
poinEmb
& 70.5 & 71.8 & 75.2 & 78.2
& 96.2 & 96.5 & 96.7 & 96.7
& 81.4 & 80.7 & 80.3 & 80.9
& 86.1 & 86.4 & 87.8 & 87.8 \\

HNN
& 94.2 & 94.0& 94.2 & 94.1
& 93.8 & 94.1 & 94.3 & 94.3
& 95.5 & \underline{96.2} & 96.1 & 96.3
& 92.6 & 92.8 & 93.7 & 93.5 \\

\midrule
GCN
& 85.8 & 82.4 & 85.5 & 87.1
& 93.1 & 94.1 & 93.9 & 95.1
& 92.2 & 93.7 & 95.2 & 95.8
& 90.6 & 92.6 & 92.9 & 93.8
\\

GAT
& 81.9 & 82.8 & 82.2 & 89.9
& 94.3 & 93.3 & 95.3 & 94.1
& 94.8 & 95.8 & \underline{96.6} & \textbf{97.2}
& 92.3 & 93.8 & 94.5 & \underline{95.2}\\


\midrule
HGCN
& 89.3 & 89.3 & 89.9 & 89.8
& 94.4 & 95.0 & 95.3 & 96.2
& 95.4 & 96.1 & 96.4 & 96.6
& \underline{93.8}& 94.6 & \underline{95.0}& 95.0
 \\

FHNN
& 96.7& 96.5 & 96.8 & 96.0
& \underline{97.3} & 96.3 & 96.7 & 96.7
& 91.3 & 90.8 & 91.0 & 95.0
& 92.5 & \underline{94.7}& \underline{95.0}& 94.9\\

\midrule
GCN2P(ours)
& 94.9 & 95.1 & 95.0 & 94.6
& 95.9 & \underline{96.7}& 96.8 & 96.9
& 93.3 & 93.4 & 93.8 & 93.9
& 92.0 & 92.4 & 93.1 & 93.9
 \\

GCN2L(ours)
& 94.6 & 95.7 & 95.4 & 94.9
& 96.2 & 95.6 & \underline{97.1}& 97.2
& 93.4 & 93.9 & 93.2 & 92.0
& 91.9 & 92.2 & 92.3 & 92.4
 \\

\midrule
GAT2P(ours)
& 96.9 & 97.1& 96.5 & 97.4
& 95.4 & 95.8 & 96.0 & 96.5
& 93.1 & 94.6 & 95.0 & 95.2
& 92.9 & 94.0 & 94.3 & 94.3
 \\

GAT2L(ours)
& \textbf{97.7} & 97.6 & \underline{97.3}& \underline{97.5}
& 95.9 & 95.8 & 96.3 & 96.1
& 93.1 & 93.9 & 94.5 & 95.3
& 92.7 & 93.7 & 94.1 & 94.8
 \\

\midrule
GCN-EH-Fusion(ours)
& \underline{97.3}& \textbf{98.4} & \textbf{98.0}& 95.4
& 96.8 & \textbf{97.3}& \textbf{97.2} & \underline{97.3}
& \underline{95.7} & 95.6 & 95.7 & 95.9
& 93.4& 93.8& 93.5 & 94.1
 \\

GAT-EH-Fusion(ours)
& 97.1 & \underline{98.3} & 97.2 & \textbf{98.1}
& \textbf{97.7} & \textbf{97.3} & \textbf{97.2} & \textbf{97.7}
& \textbf{96.0} & \textbf{96.5} & \textbf{96.9} & \underline{96.9}
& \textbf{94.3} & \textbf{95.0} & \textbf{95.4} & \textbf{95.3}
 \\

\bottomrule
\end{tabular}

\end{table*}

\subsection{Experimental Results}

Table~1 summarizes the performance comparison across four benchmark datasets with varying $\delta$-hyperbolicity levels, where smaller $\delta$ indicates stronger tree-likeness and more pronounced hyperbolic structure. The empirical results reveal a clear relationship between dataset hyperbolicity and geometric modeling effectiveness. On strongly hyperbolic datasets such as Disease ($\delta=0$) and Airport ($\delta=1$), introducing hyperbolic realization leads to substantial and consistent improvements over purely Euclidean baselines. 

For example,  On strongly hyperbolic datasets such as Disease ($\delta=0$), hyperbolic modeling brings substantial improvements over purely Euclidean baselines. 
For instance, Euclidean GCN achieves at most 87.1\%, while Euclidean GAT reaches 89.9\%. 
In contrast, hyperbolic variants such as  FHNN achieve up to 96.0\%, yielding nearly a 6
\% points improvement over Euclidean GAT. 
Our lightweight hyperbolic extensions, GAT2P and GAT2L, further improve the performance to 97.4\% and 97.5\%, respectively. 
The proposed GAT-EH-Fusion achieves up to 98.1\%, surpassing the strongest single-geometry hyperbolic baseline by approximately 1.3\%.
A similar trend is observed on the Airport dataset ($\delta=1$). 
Euclidean GCN achieves 95.1\%, and Euclidean GAT reaches 94.1\%. 
Hyperbolic baselines such as HGCN and FHNN improve performance to 96.2\% and 96.7\%, respectively. Our GAT2L achieves 96.1\%, while GAT2P reaches 96.5\%, again matching most  hyperbolic baselines. The adaptive GAT-EH-Fusion further improves the result to 97.7\%, achieving the best overall performance. 

Notably, GAT2P and GAT2L introduce only a single hyperbolic exponential mapping layer on top of the original Euclidean GAT backbone, without redesigning the entire architecture into a fully hyperbolic network. 
Despite this minimal modification, they already achieve performance comparable to fully hyperbolic models such as HGCN and FHNN. 
This observation is consistent with the theoretical analysis in Section~4, which suggests that the primary advantages of hyperbolic modeling arise from the metric structure of the representation space.
In particular, once node representations are realized in a geometry whose volume growth matches the exponential expansion of tree-like graphs, hierarchical separation can be better preserved under the hyperbolic distance. 
Therefore, even a lightweight hyperbolic realization is sufficient to effectively capture the structural characteristics of strongly hierarchical graphs.
As the $\delta$ value increases (e.g., Cora with $\delta=5$ and Citeseer with $\delta=11$), the relative advantage of hyperbolic modeling becomes less pronounced. On Cora, Euclidean GAT achieves 95.2\%, which is already comparable to hyperbolic baselines around 95.0\%. Similarly, on Citeseer, Euclidean GAT reaches 97.2\%, remaining competitive with hyperbolic counterparts. In contrast to the strongly hyperbolic datasets, introducing only a single hyperbolic exponential mapping layer (e.g., GAT2P and GAT2L) brings limited  improvement in these cases, as the underlying graph structure exhibits weaker hierarchical expansion and is more compatible with Euclidean geometry. This suggests that when the graph is moderately or weakly hyperbolic, Euclidean geometry already provides an effective inductive bias, and fixed hyperbolic realization may yield  slight performance fluctuations.

\begin{table}[h]
\caption{ACC(\%) performance comparison for image-attributed graph.}
\centering
\setlength{\tabcolsep}{4pt}
\begin{tabular}{l|l|cccc}
\toprule
 & & \multicolumn{4}{c}{Ele-Fashion} \\
Model & Image Encoder & dim=8 & dim=16 & dim=32 & dim=64 \\
\hline
\multirow{4}{*}{GCN}
& ImageBind & 79.45 & 82.43 & 82.46 & 83.52 \\
& CLIP      & 78.73 & 79.53 &\textbf{ 80.91} & 80.57 \\
& DINOv2    & \textbf{79.50} & 80.13 & 80.07 & 81.84 \\
& ViT       & 78.35 & 80.46 & 79.80 & 80.81 \\
\hline
\multirow{4}{*}{GCN2L}
& ImageBind & \textbf{80.34} & \textbf{83.08} & \textbf{83.30} & \textbf{84.29} \\
& CLIP      & \textbf{80.19} & \textbf{80.90} & 80.52 & \textbf{81.59}\\
& DINOv2    & 78.77 & \textbf{81.37} &\textbf{81.38} &\textbf{82.26} \\
& ViT       & \textbf{80.25} & \textbf{80.93} & \textbf{81.24} &\textbf{81.31} \\
\bottomrule
\end{tabular}
\end{table}
We observe that  the proposed EH-Fusion models maintain stable performance across all datasets , achieving 95.3\% on Cora and 96.9\% on Citeseer. 
Importantly, this robustness is not limited to high-dimensional settings. 
Even under low embedding dimensions (e.g., 8, 16, and 32), EH-Fusion consistently achieves competitive or superior performance, demonstrating strong representation efficiency. This behavior highlights the benefit of adaptive manifold selection in handling structural heterogeneity while maintaining stable performance even in low-dimensional regimes.  By dynamically balancing Euclidean and hyperbolic representations  according to dataset geometry, EH-Fusion achieves robust and consistently strong performance across varying levels of hyperbolicity and embedding dimensions.

Table~2 reports the node classification results on the Ele-Fashion dataset under different visual encoders and embedding dimensions. We observe that the hyperbolic variant (GCN2L) generally outperforms the Euclidean GCN across different visual encoders and embedding dimensions, with consistent improvements observed in the majority of settings. These improvements are also relatively stable across different configurations. 
For example, with ImageBind features, GCN2L improves the performance from $79.45\%$ to $80.34\%$ at dim=8, from $82.43\%$ to $83.08\%$ at dim=16, and from $82.46\%$ to $83.30\%$ at dim=32. 
Similar trends can be observed for other encoders such as CLIP, DINOv2, and ViT.
Overall, this pattern suggests that the performance gains brought by hyperbolic modeling are largely independent of the specific visual encoder. 
In addition, the relative advantage of GCN2L remains stable across different dimensionalities, further highlighting the robustness of hyperbolic modeling.

\subsection{analysis}

\subsubsection{Ablation Study }


\begin{table}[ht]
\centering
\caption{Ablation study of the hyperbolic mapping layer and EH-Fusion module.}
\begin{tabular}{lcccc}
\toprule
 & \makecell{\textbf{Disease} } 
 & \makecell{\textbf{Airport} } 
 & \makecell{\textbf{Citeseer} } 
 & \makecell{\textbf{Cora}} \\
\midrule
\textbf{HGCN} & 89.9 & 96.2 & 96.6 & 95.0 \\
\textbf{GCN} & 87.1 & 95.1 & 95.8 & 93.8 \\
\textbf{+Exp2P} & 94.6 & 96.9 & 93.9 & 93.9 \\
\textbf{+Exp2L} & 94.9 & 97.2 & 92.0 & 92.4 \\
\textbf{+Exp+EH-Fusion} & 95.4 & 97.3 & 95.9 & 94.1 \\
\midrule
\textbf{FHNN} & 96.0 & 96.7 & 95.0 & 94.9 \\
\textbf{GAT} & 89.9 & 94.1 & \textbf{97.2} & 95.2 \\
\textbf{+Exp2P} & 97.4 & 96.5 & 95.2 & 94.3 \\
\textbf{+Exp2L} & 97.5 & 96.3 & 95.2 & 94.8 \\
\textbf{+Exp+EH-Fusion} & \textbf{98.1} & \textbf{97.7} & 96.9 & \textbf{95.3} \\
\bottomrule
\end{tabular}
\end{table}

\begin{figure}
    \centering
    \includegraphics[width=1\linewidth]{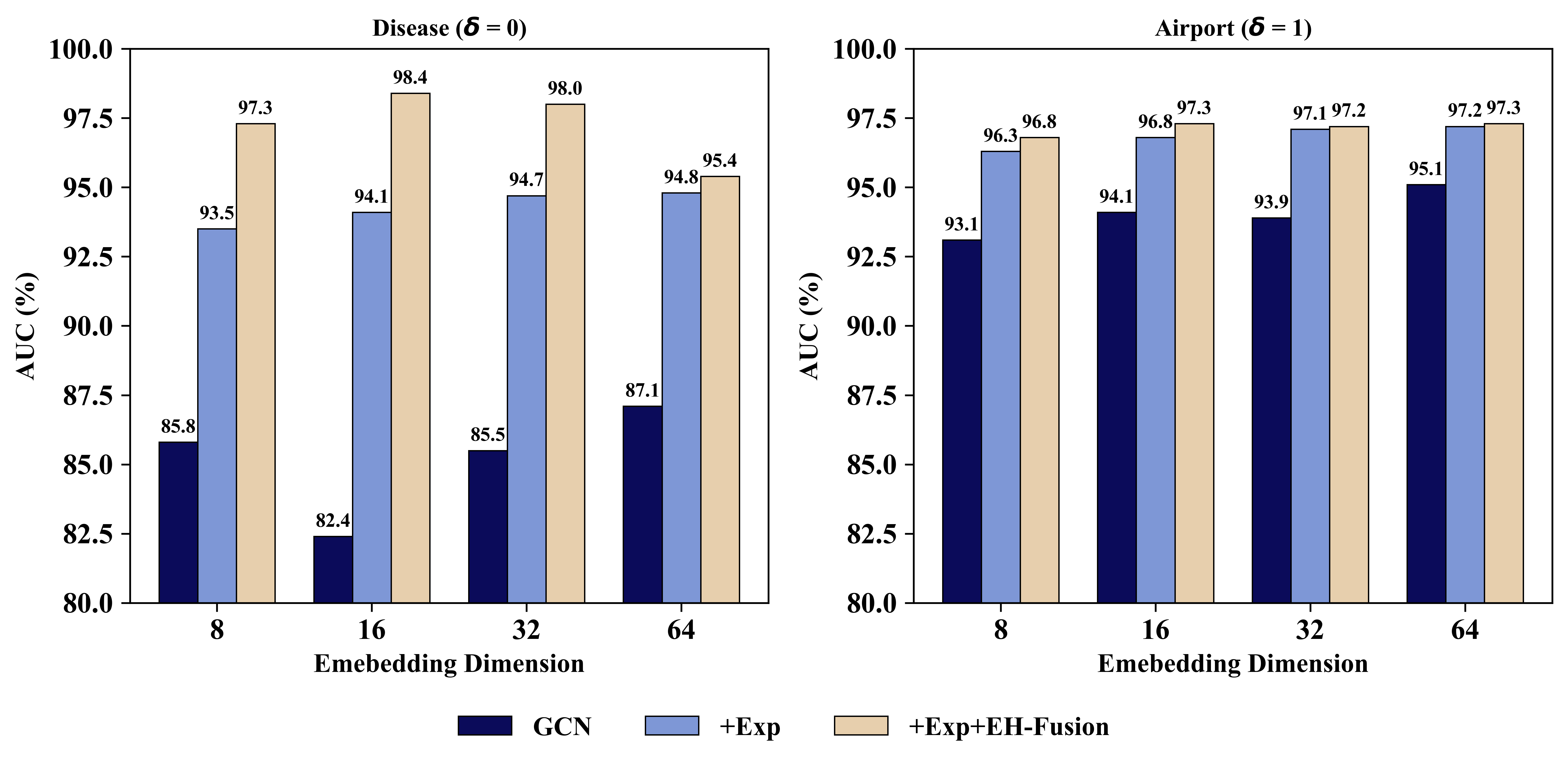}
    \caption{Effect of the exponential mapping layer and EH-Fusion on strongly hyperbolic datasets.  A single hyperbolic mapping significantly improves performance, and adaptive fusion provides further gains across embedding dimensions. }
    \label{fig:ablation}
\end{figure}

To evaluate the contribution of each component, we conduct ablation experiments by progressively adding the hyperbolic exponential mapping layer and the adaptive EH-Fusion mechanism to standard Euclidean backbones (GCN and GAT). Table~2 summarizes the results.

\paragraph{Effect of Single Hyperbolic Mapping.}
Adding a single exponential mapping layer to project Euclidean representations into either the Poincar\'e or Lorentz model consistently improves performance on strongly hyperbolic datasets. On Disease with $\delta=0$, GCN improves from 87.1\% to 94.6\% under the Poincar\'e mapping and to 94.9\% under the Lorentz mapping, yielding gains of approximately 7.5 to 7.8 percentage points.
Similarly, GAT increases from 79.7\% to 97.4\% with the Poincar\'e mapping and to 97.5\% with the Lorentz mapping. On Airport with $\delta=1$, GCN rises from 95.1\% to 97.2\% under the Lorentz mapping, while GAT improves from 94.1\% to 96.5\% under the Poincar\'e mapping. These improvements indicate that even a lightweight hyperbolic realization can substantially enhance modeling capacity when the graph exhibits strong hierarchical structure.

As the $\delta$ value increases, the gains from fixed hyperbolic mapping become smaller and occasionally fluctuate. On Cora with $\delta=5$, GCN increases marginally from 93.8\% to 93.9\% under the Poincar\'e mapping but decreases to 92.4\% under the Lorentz mapping. On Citeseer with $\delta=11$, GCN drops from 95.8\% to 92.0\% under the Lorentz mapping. These observations suggest that when the dataset structure is less hierarchical, imposing a fixed hyperbolic geometry may not consistently improve performance. In addition,we show introducing the Fusion module enhances robustness across datasets.
On Disease, GCN with EH-Fusion achieves 95.4\%, and GAT with EH-Fusion reaches 98.1\%, surpassing their single-geometry counterparts.
On Airport, GAT with EH-Fusion achieves 97.7\%. 
 Importantly, on datasets with larger $\delta$ values, EH-Fusion mitigates the degradation observed under fixed hyperbolic mappings. For example, on Citeseer, GCN with EH-Fusion recovers performance to 95.9\%, compared with 92.0\% under the Lorentz mapping. Similarly, GAT with EH-Fusion maintains competitive performance at 96.9\%.

Overall, the ablation results indicate that  a single hyperbolic realization layer already provides substantial gains on strongly hierarchical graphs, and the adaptive manifold fusion further enhances stability and robustness across datasets with heterogeneous structural properties.




\begin{figure}[ht]
    \centering
    \includegraphics[width=0.8\linewidth]
    {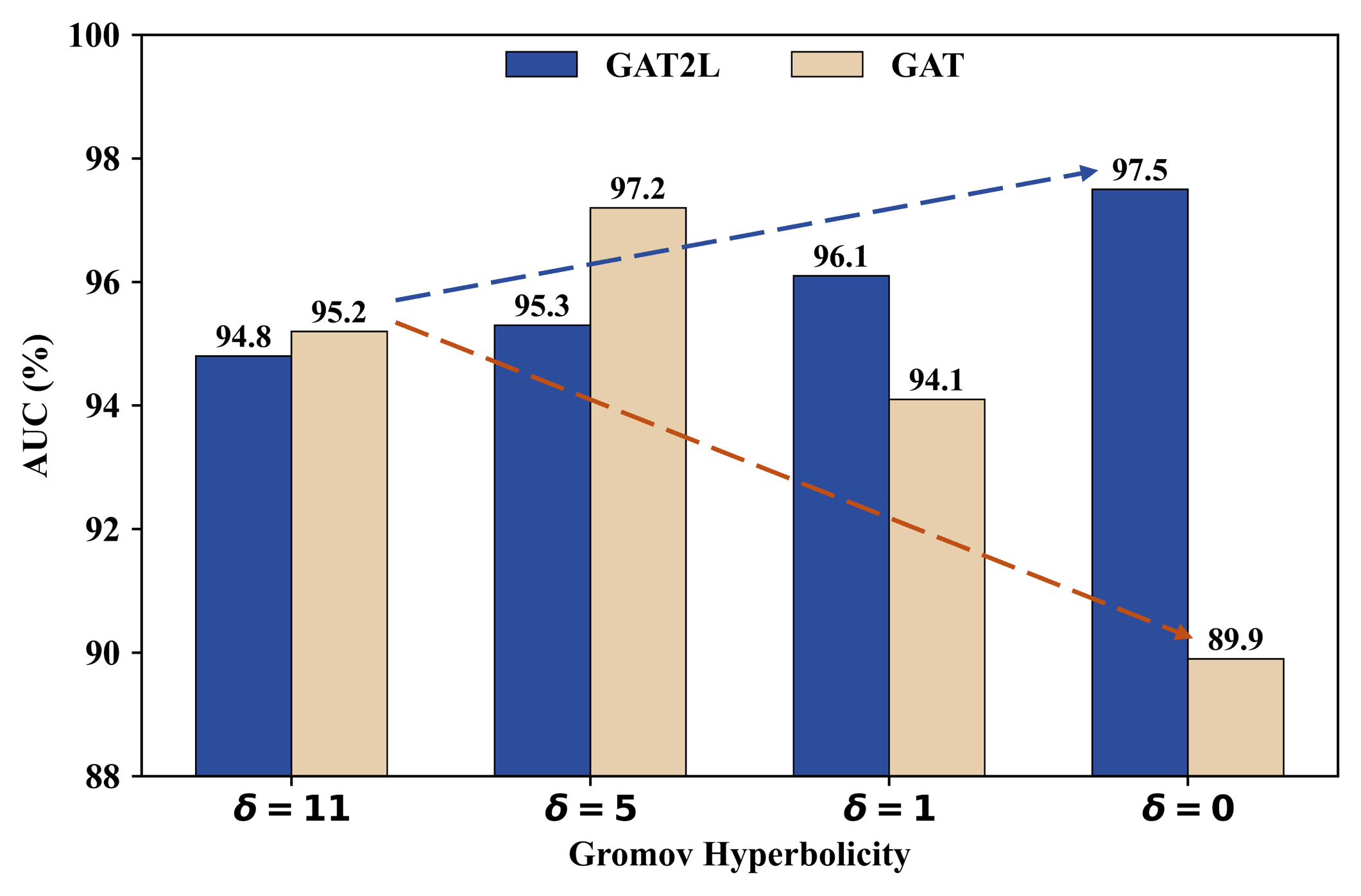}
    \caption{AUC comparison of GAT in Lorentz hyperbolic space and Euclidean space on datasets with varying $\delta$-hyperbolicity levels. Smaller $\delta$ indicates stronger hyperbolicity. }
    \label{fig:delta_gat_64}
\end{figure}


\subsubsection{Effect of Gromov Hyperbolicity}

Figure~\ref{fig:delta_gat_64} compares the performance of GAT in hyperbolic and Euclidean spaces under different Gromov $\delta$-hyperbolicity levels.
From the perspective of increasing hyperbolicity (i.e., decreasing $\delta$), 
the performance of GAT in the Lorentz hyperbolic space(GAT2L) improves consistently. 
As $\delta$ decreases, the AUC of the hyperbolic model steadily improves from 94.8\% to 97.5\%, demonstrating its increasing advantage on strongly hierarchical data. In contrast, Euclidean GAT exhibits an overall opposite trend: after a slight improvement at moderate $\delta$, its performance deteriorates sharply from 97.2\% to 89.9 \% when the hierarchy becomes stronger.
When viewed from the direction of decreasing hyperbolicity  (i.e., increasing $\delta$), Euclidean GAT shows steadily improved performance, while the advantage of hyperbolic embeddings correspondingly diminishes and may even vanish.
This observation is consistent with our theoretical analysis. 
As the data becomes less hierarchical, neighborhood expansion departs from exponential growth, which alleviates the crowding effect and makes Euclidean geometry increasingly compatible with the underlying graph structure.

\subsubsection{Impact of EH-Fusion for Avoiding Geometric Mismatch}

\begin{figure}[ht]
    \centering
    \includegraphics[width=0.8\linewidth]{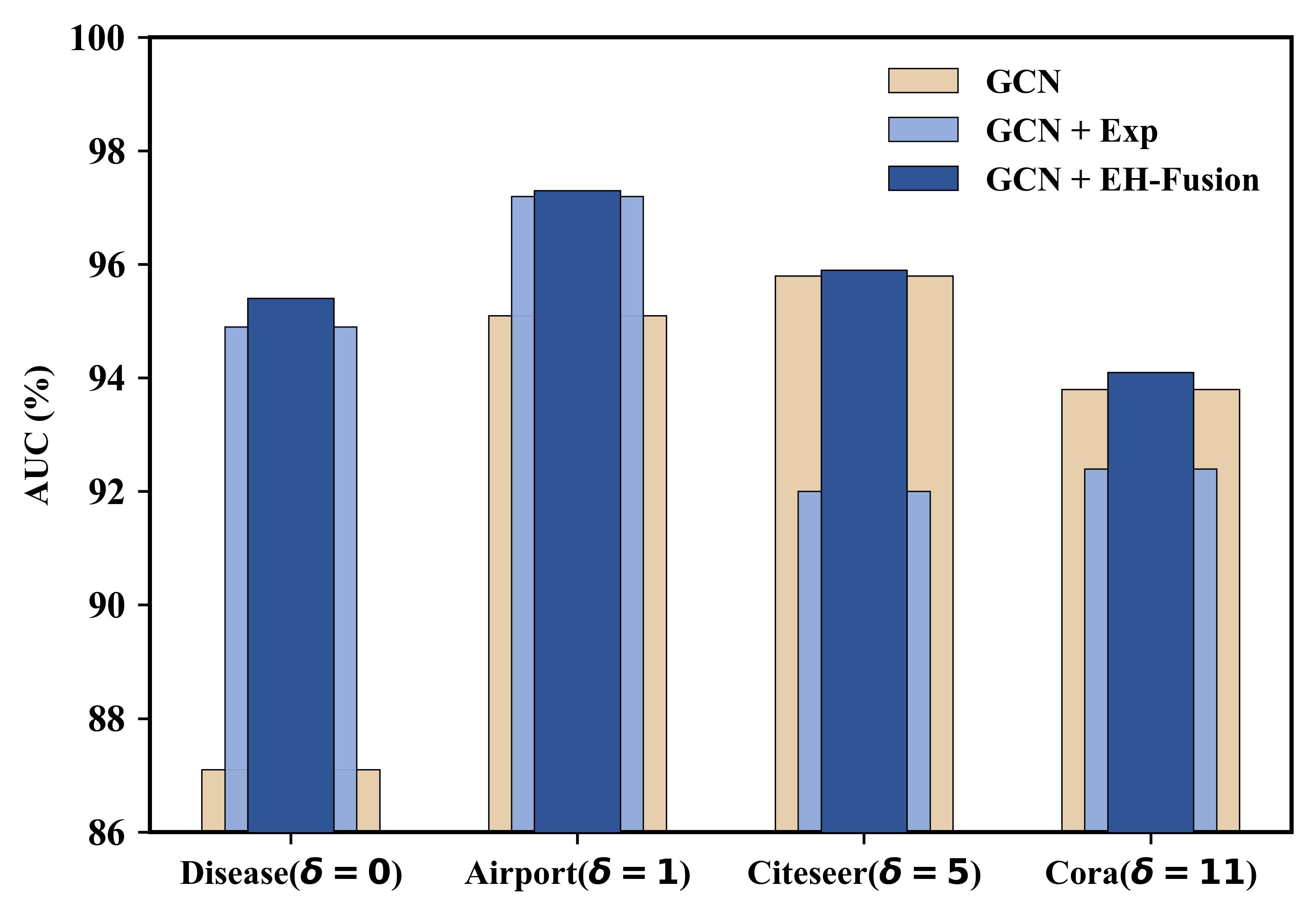}
    \caption{EH-Fusion analysis. Adaptive manifold selection mitigates geometric mismatch and stabilizes performance across datasets.}
    \label{fig:geometry-adaptation}
\end{figure}

Figure~4 compares GCN, GCN with a fixed hyperbolic exponential mapping, and GCN with the proposed EH-Fusion mechanism across datasets with different $\delta$-hyperbolicity levels.
The results reveal that fixed-geometry models may suffer from geometric mismatch when the assumed manifold does not align with the intrinsic structure of the dataset. On strongly hyperbolic data such as Disease with $\delta=0$, introducing a hyperbolic mapping significantly improves performance over the Euclidean baseline. However, as $\delta$ increases and the dataset becomes less tree-like, the limitation of fixed hyperbolic modeling becomes evident.
On Cora and Citeseer, the performance gain from a fixed hyperbolic mapping diminishes and, in some cases, slightly decreases compared to the Euclidean baseline. This indicates that when the graph structure is closer to polynomial growth, imposing a hyperbolic metric may introduce unnecessary geometric distortion.

In contrast, EH-Fusion consistently maintains or improves performance across all datasets. By dynamically balancing Euclidean and hyperbolic representations according to structural signals, the model avoids committing to a single manifold assumption. When the data exhibits strong hierarchy, the hyperbolic component is emphasized. When the structure is flatter, Euclidean representations dominate. This adaptive mechanism effectively mitigates geometric mismatch and provides robust performance under varying levels of hyperbolicity.

\subsubsection{Impact of Backbone Model}

Figure~5 illustrates the performance comparison between the Euclidean backbones GCN and GAT. Overall, stronger backbone models tend to achieve better performance, both equipped with a single exponential mapping layer or the EH-Fusion module. In particular, when the dataset exhibits strong hierarchical structure (i.e., small $\delta$-hyperbolicity), even a single hyperbolic realization is sufficient to capture the underlying hierarchical relationships.
\begin{figure}[ht]
    \centering
    \includegraphics[width=1\linewidth]{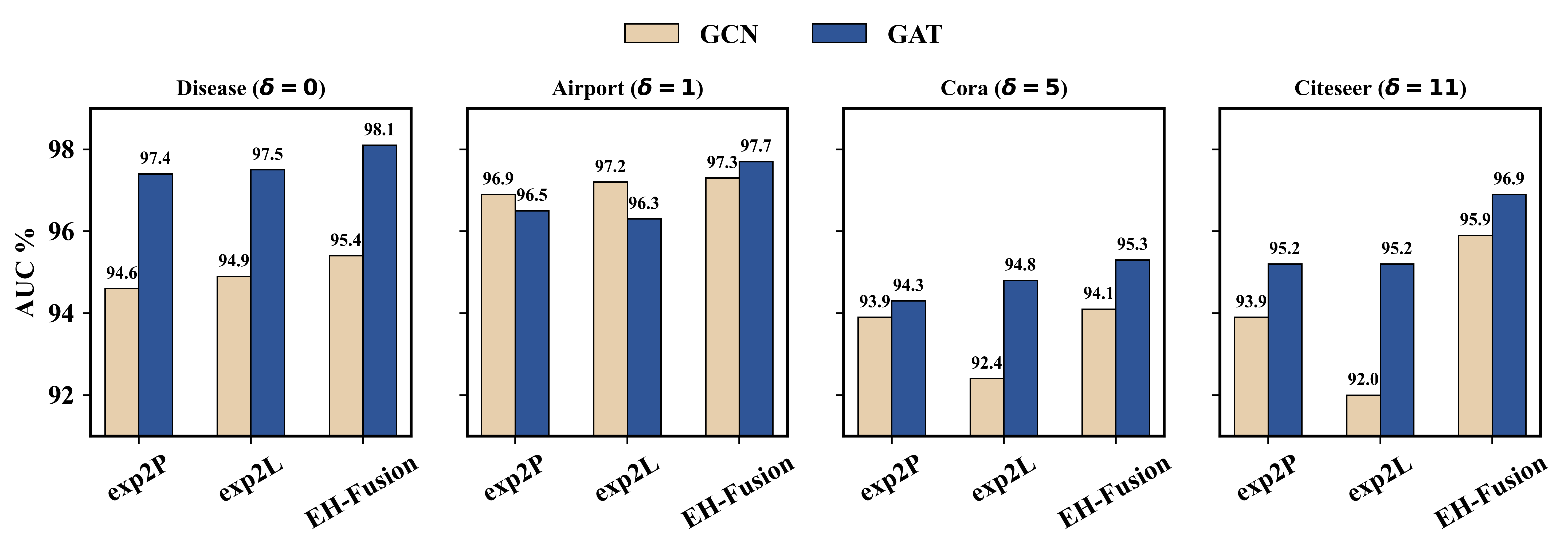}
    \caption{Effect of bacebone model  on Exponential Mapping and EH-Fusion moudule. }
    \label{fig:backbone-effect}
\end{figure}
From an optimization perspective,  the metric-based objective encourages hierarchical separation in the representation space, and this structural inductive bias is propagated backward via standard gradient-based optimization.  Consequently, when the backbone model possesses sufficient expressive capacity, introducing a single exponential mapping layer is often sufficient to exploit the hyperbolic metric and induce high-quality hierarchical representations. 
\textbf{This provides an explanation for the different design paradigms in hyperbolic representation learning.} In deep neural networks and large pretrained models, strong representational capacity often enables the backbone to produce high-quality latent features, making a single geometric realization layer sufficient to induce meaningful hierarchical representation under a hyperbolic metric. 
In contrast, graph neural networks are typically shallow to mitigate over-smoothing, which may limit  their expressive power.  Under these conditions, incorporating additional hyperbolic operations within the network can improve alignment with the target geometry and yield further performance gains.

Overall, the experimental results indicate that the effectiveness of Euclidean-to-hyperbolic (E2H) modeling is jointly influenced by the intrinsic latent geometry of the data and the expressive capacity of the baseline encoder.  Stronger backbones amplify the benefits of metric-based realization, while adaptive fusion mitigates geometric mismatch and ensures consistent performance under heterogeneous hyperbolicity levels.

\subsection{Discussion and Future Work}

This work provides a geometric and generalization-theoretic perspective on the advantage of hyperbolic representations for hierarchical data. By quantifying Euclidean crowding through layer-wise minimum separation and linking it to margin-based generalization, we explain why Euclidean embeddings struggle with tree-like structures. Furthermore, to handle graphs with mixed geometric structures, we propose EH-Fusion, a lightweight geometry-adaptive framework that balances Euclidean and hyperbolic metrics to leverage their complementary strengths, enabling improved geometric consistency.

Empirically, we observe that mixed-manifold models consistently outperform their single-manifold counterparts. Moreover, the relative advantage of each geometry is strongly governed by the intrinsic structure of the data: hyperbolic models are more effective for highly tree-like graphs with strong hyperbolicity, whereas Euclidean models become increasingly competitive as the degree of hyperbolicity decreases. These findings reinforce the connection between structural separability, geometric crowding, and representation effectiveness. Nevertheless, identifying a principled $\delta$-hyperbolicity threshold that can reliably guide the pre-selection of embedding geometry remains an open and practically important direction.

Beyond $\delta$-hyperbolicity, future work may explore more directly computable separability measures. To account for scale variations, one may consider a normalized variant of the separation margin, $\tilde{\Delta} = 1 - d^+/d^-$, which provides a scale-invariant characterization of structural separability. For computational convenience, crowding in hierarchical graphs can be characterized by the ratio between the average sibling distance and the average parent--child distance, which captures intra-layer compression. This idea naturally extends to general graphs by replacing sibling nodes with $k$-hop intra-layer neighbors and parent nodes with predecessor nodes from the previous layer.

These geometry-aware measures not only provide analytical characterization of learned representations but also offer practical utility for downstream tasks. Among various applications, we identify graph generation as the most promising direction. Specifically, separation-based geometric measures such as \(\Delta\) and its induced crowding metrics can serve multiple roles in graph generation: (1) as \textbf{evaluation criteria} for geometric fidelity, by comparing structural separability between real and generated graphs; (2) as \textbf{regularization terms} to suppress overly crowded structures and enable controllable generation of hierarchical patterns; and (3) as \textbf{conditioning signals} for the generation process, where a generator is trained to produce graphs whose \(\Delta\) falls within a specified range, allowing fine-grained control over branching patterns and hierarchical depth.

Overall, this perspective connects structural separability and local crowding quantification to geometry-aware graph generation. We hope these insights will inform the design of future graph learning methods with stronger geometric consistency.


\begin{appendices}

\section{Proof of Theorem 1}

\begin{proof}
We divide the proof into several steps.

\paragraph{Step 1: Construct disjoint balls around layer-$l$ embeddings.}

For each node $v_i\in\mathcal V_l$, define the open Euclidean ball
\begin{equation}
\small
B_i := B\!\left(f(v_i),\frac{\delta_l}{2}\right)
=
\left\{
h\in\mathbb R^d:
\|h-f(v_i)\|_2 < \frac{\delta_l}{2}
\right\}.
\label{eq:def_B_i}
\end{equation}

We claim that the family $\{B_i\}_{i=1}^{N_l}$ is pairwise disjoint.
Indeed, suppose for contradiction that there exists a point
$h\in B_i\cap B_j$ for some $i\neq j$.
Then, by the triangle inequality,
\begin{equation}
\small
\|f(v_i)-f(v_j)\|_2
\le
\|f(v_i)-h\|_2 + \|h-f(v_j)\|_2 
<
\delta_l.
\label{eq:contradiction_overlap}
\end{equation}
This contradicts the definition of $\delta_l$. 
Hence,
\begin{equation}
B_i \cap B_j = \varnothing
\qquad
\text{for all } i\neq j.
\label{eq:disjointness}
\end{equation}

Therefore, since the balls are pairwise disjoint,
\begin{equation}
\mathrm{Vol}\!\left(\bigcup_{i=1}^{N_l} B_i\right)
=
\sum_{i=1}^{N_l}\mathrm{Vol}(B_i).
\label{eq:volume_sum_disjoint}
\end{equation}

Let
\[
\overline V_d := \mathrm{Vol}(B(0,1))
\]
denote the volume of the unit ball in $\mathbb R^d$.
Using the standard scaling law for Euclidean balls,
\[
\mathrm{Vol}(B(0,R)) = \overline V_d\,R^d,
\]
we have
\begin{equation}
\mathrm{Vol}(B_i)
=
\mathrm{Vol}\!\left(B\!\left(0,\frac{\delta_l}{2}\right)\right)
=
\overline V_d\left(\frac{\delta_l}{2}\right)^d.
\label{eq:vol_Bi}
\end{equation}
Substituting \eqref{eq:vol_Bi} into \eqref{eq:volume_sum_disjoint} yields
\begin{equation}
\mathrm{Vol}\!\left(\bigcup_{i=1}^{N_l} B_i\right)
=
N_l\,\overline V_d \left(\frac{\delta_l}{2}\right)^d.
\label{eq:union_ball_volume}
\end{equation}

\paragraph{Step 2: Bound the radial extent of the $l$-th layer by $A_l$.}

Fix any node $v\in\mathcal V_l$.
Let
\[
v_0=\mathrm{root},\ v_1,\ \dots,\ v_l=v
\]
be a root-to-$v$ path.
By repeated application of the triangle inequality,

\begin{equation}
\small
\begin{aligned}
\|f(v)\|_2
&\le \|f(v_l)-f(v_0)\|_2 + \|f(v_0)\|_2 \\[2pt]
&\le \sum_{i=1}^{l} \|f(v_i)-f(v_{i-1})\|_2 + \|f(v_0)\|_2.
\end{aligned}
\label{eq:path_triangle}
\end{equation}
Since translation does not change the internal structure, we may assume without loss of generality that the embedding of the root is the origin, i.e., $f(\mathrm{v_0}) = \mathbf{0}$. 

Now, for each edge $(v_{i-1},v_i)\in\mathcal E_i$, the definition of  \(
\alpha_i =
\max_{(u,v)\in\mathcal E_i}
\frac{\|f(u)-f(v)\|_2}{d_i},
\)  implies
\begin{equation}
\|f(v_i)-f(v_{i-1})\|_2 \le \alpha_i d_i.
\label{eq:edge_stretch_bound}
\end{equation}
Substituting \eqref{eq:edge_stretch_bound} into \eqref{eq:path_triangle} gives
\begin{equation}
\|f(v)\|_2
\le
\sum_{i=1}^{l} \alpha_i d_i
=
A_l.
\label{eq:radial_bound}
\end{equation}
Since $v\in\mathcal V_l$ was arbitrary, every embedded node in the $l$-th layer lies in the Euclidean ball
\begin{equation}
f(\mathcal V_l)\subset B(0,A_l).
\label{eq:layer_inside_A_l}
\end{equation}

\paragraph{Step 3: Bound the union of small balls by one larger ball.}

Take any $h\in B_i$.
By definition of $B_i$ and \eqref{eq:radial_bound},
\begin{equation}
\|h\|_2
\le
\|h-f(v_i)\|_2 + \|f(v_i)\|_2
<
\frac{\delta_l}{2} + A_l.
\label{eq:h_inside_big_ball}
\end{equation}
Hence,
\begin{equation}
B_i \subset B\!\left(0,A_l+\frac{\delta_l}{2}\right)
\qquad
\text{for all } i.
\label{eq:each_ball_inside_big_ball}
\end{equation}
Therefore,
\begin{equation}
\bigcup_{i=1}^{N_l} B_i
\subset
B\!\left(0,A_l+\frac{\delta_l}{2}\right).
\label{eq:union_inside_big_ball}
\end{equation}
Taking volumes and using monotonicity of Lebesgue measure, we obtain
\begin{equation}
\mathrm{Vol}\!\left(\bigcup_{i=1}^{N_l} B_i\right)
\le
\mathrm{Vol}\!\left(B\!\left(0,A_l+\frac{\delta_l}{2}\right)\right)
=
\overline V_d \left(A_l+\frac{\delta_l}{2}\right)^d.
\label{eq:upper_union_volume}
\end{equation}

\paragraph{Step 4: Compare volumes.}

Combining \eqref{eq:union_ball_volume} and \eqref{eq:upper_union_volume}, we get
\begin{equation}
N_l\,\overline V_d \left(\frac{\delta_l}{2}\right)^d
\le
\overline V_d \left(A_l+\frac{\delta_l}{2}\right)^d.
\label{eq:volume_compare}
\end{equation}
Canceling the positive constant $\overline V_d$ gives
\begin{equation}
N_l \left(\frac{\delta_l}{2}\right)^d
\le
\left(A_l+\frac{\delta_l}{2}\right)^d.
\label{eq:volume_compare_simple}
\end{equation}

Next, we show that \(\delta_l \le 2A_l\).
Indeed, since all points \(f(v_i)\) lie in \(B(0,A_l)\), any two of them satisfy
\begin{equation}
\|f(v_i)-f(v_j)\|_2 \le 2A_l.
\end{equation}
Taking the minimum over \(i\neq j\) gives
\begin{equation}
\delta_l \le 2A_l.
\label{eq:delta_le_2A}
\end{equation}
Substituting \eqref{eq:delta_le_2A} into the right-hand side of \eqref{eq:volume_compare_simple}, we obtain
\[
A_l+\frac{\delta_l}{2}\le 2A_l.
\]
Therefore,
\begin{equation}
N_l \left(\frac{\delta_l}{2}\right)^d
\le
(2A_l)^d.
\label{eq:volume_compare_final}
\end{equation}

Taking the \(d\)-th root of both sides yields
\[
N_l^{1/d}\frac{\delta_l}{2} \le 2A_l,
\]
and hence
\begin{equation}
\delta_l \le 4A_l\,N_l^{-1/d}.
\label{eq:delta_A_N}
\end{equation}

\paragraph{Step 5: Specialize to exponential layer growth.}

For the depth-$l$ layer $\mathcal V_l$, let $k_i$ denote the average branching factor from depth $i-1$ to depth $i$. Then the size of $\mathcal V_l$ is exactly
\begin{equation}
|\mathcal V_l| = N_l = \prod_{i=1}^l k_i .
\end{equation}
Defining
\(k = \left(\prod_{i=1}^l k_i\right)^{1/l},\)
we may equivalently write
\begin{equation}
|\mathcal V_l| = N_l = k^l .
\end{equation}
 Substituting this into \eqref{eq:delta_A_N} immediately gives
\begin{equation}
\delta_l \le CA_l\,k^{-l/d}= C(\sum_{i=1}^l \alpha_i d_i)\,k^{-l/d},
\label{whole-eq}
\end{equation}
which proves \eqref{crowding_theorem}.

Consequently, if the Euclidean embedding preserves the radial tree
distance and the edge weights $d_i$ along the tree remain unchanged,
then
\[
\delta_l \le CA_l\,k^{-l/d}
           = C\!\left(\sum_{i=1}^{l} d_i\right) k^{-l/d}
           = Cd_1(l\,k^{-l/d}).
\]

For fixed and moderate embedding dimensions, the bound implies that
$\delta_l$ decreases with depth roughly at the rate $k^{-l/d}$, up to a polynomial factor in $l$.
Moreover, the exponent $1/d$ suggests that increasing the embedding dimension $d$ can partially
mitigate this crowding effect.


\paragraph{Step 6: Stretch--squeeze trade-off.}

Conversely, suppose one requires that the same-layer separation
remains bounded away from zero, i.e.,
\[
\delta_l \ge m > 0 .
\]
Then \eqref{whole-eq} implies

\[
m
\le
C\,A_l\,k^{-l/d},
\]

which yields

\begin{equation}
A_l \ge \frac{m}{C}\,k^{l/d}.
\end{equation}

This reveals an inherent geometric tension in Euclidean embeddings of
hierarchical graphs: maintaining sufficient separation between nodes
within the same layer requires exponentially increasing radial
distances, whereas constraining radial growth inevitably leads to
crowding within layers. This tension constitutes the stated
\emph{stretch--squeeze trade-off}.
\end{proof}

\section{Proof of Lemma 1}

\begin{proof}

Following standard margin-based learning theory, the functional margin is defined on the real-valued score $s(u,v) \in \mathbb{R}$ rather than its sigmoid-transformed probability. The sigmoid transformation maps real-valued scores into the bounded interval $(0,1)$, which compresses large score values and distorts the relative magnitudes of margins. 

Recall that the functional margin is defined as
\begin{equation}
\hat\gamma 
:= 
\min_{(u,v)\in\mathcal D^+\cup\mathcal D^-} 
y_{uv}\, s(u,v),
\qquad
s(u,v)=\frac{r-d(u,v)}{t},
\end{equation}
where $y_{uv}=+1$ for $(u,v)\in\mathcal D^+$ and $y_{uv}=-1$ for $(u,v)\in\mathcal D^-$.

For any positive pair $(u,v)\in\mathcal D^+$, by definition of $d_+$ we have $d(u,v)\le d_+$, and therefore
\begin{equation}
y_{uv}s(u,v)
=
\frac{r-d(u,v)}{t}
\ge
\frac{r-d^+}{t}.
\end{equation}
Similarly, for any negative pair $(u,v)\in\mathcal D^-$, by definition of $d_-$ we have $d(u,v)\ge d_-$, which implies
\begin{equation}
y_{uv}s(u,v)
=
-\frac{r-d(u,v)}{t}
=
\frac{d(u,v)-r}{t}
\ge
\frac{d^- - r}{t}.
\end{equation}
Combining the two cases yields
\begin{equation}
\hat\gamma(r)=\min\!\left\{
\frac{r-d^+}{t},
\frac{d^- - r}{t}
\right\} .
\end{equation}
The right-hand side is a concave, piecewise-linear function of $r$, whose maximum is attained when the two terms are equal, namely at $r^\star = \frac{d_- + d_+}{2}.$

Substituting $r^\star$ into the bound above gives
\begin{equation}
\sup_{r\in\mathbb R}\hat\gamma
=\hat\gamma(r^\star)=
\frac{d_- - d_+}{2t}
=\frac{\Delta}{2t},
\end{equation}
where $\Delta := d_- - d_+$ denotes the geometric separation between negative and positive pairs.

\end{proof}

\section{Proof of Theorem 2}

\begin{proof}
\label{app:gen_bound}


Following Bartlett and Mendelson's Rademacher-complexity framework\cite{bartlett2002rademacher}, we obtain a margin-based generalization bound by applying a margin surrogate $\phi_\gamma$ (the ramp loss) to the pairwise signed scores $y_{uv}s(u,v)$.
The ramp loss is $1/\gamma$-Lipschitz and dominates the $0$--$1$ loss, i.e., $\phi(z) \ge \mathbf{1}(z \le 0)$, making it a valid surrogate for margin-based generalization analysis.

\begin{figure}[h]
	\centering
	\includegraphics[width=\linewidth]{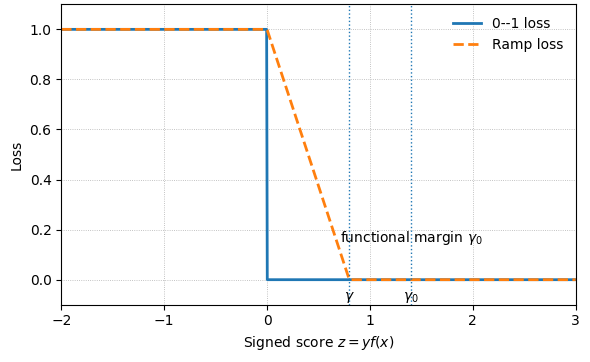}
\caption{
\textbf{Ramp Loss vs. 0--1 Loss.}
The ramp loss serves as a $1/\gamma$-Lipschitz surrogate that upper-bounds the $0$--$1$ loss.
The parameter $\gamma$ controls the slope of the loss, while $\gamma_0$ denotes the functional margin.
}
\end{figure}
We start from Theorem~7 of \cite{bartlett2002rademacher}, which states the following.

Let $\mathcal F$ be a class of real-valued functions defined on data-label pairs,
and let $\phi:\mathbb R\to[0,1]$ satisfy
(i) $\phi(\alpha)\ge \mathbf 1[\alpha\le 0]$ for all $\alpha$, and
(ii) $\phi$ is Lipschitz with constant $L$.
Then, with probability at least $1-\delta$,

\begin{equation}
\label{eq:bm_thm7}
\begin{split}
\mathbb{P}(y_{uv} f(u,v) \le 0)
\le \frac{1}{n} \sum_{i=1}^{n} \phi\big(y_{u_i v_i} f(u_i, v_i)\big) \\
+ 2L \mathfrak{R}_n(\mathcal{F})
+ \sqrt{\frac{\log(2/\delta)}{2n}}.
\end{split}
\end{equation}

Here, $\mathbb P(y_{uv} f(u,v)\le 0)$ denotes the population pairwise misclassification
probability, i.e., the true generalization error.

Fix any $\gamma>0$ and define the ramp  surrogate
\[
\phi_\gamma(\alpha)=
\begin{cases}
1, & \alpha\le 0,\\
1-\alpha/\gamma, & 0<\alpha\le \gamma,\\
0, & \alpha>\gamma.
\end{cases}
\]
By construction, $\phi_\gamma(\alpha)\ge \mathbf 1(\alpha\le 0)$ for all $\alpha$.
Moreover, $\phi_\gamma$ is $1/\gamma$-Lipschitz and satisfies the assumptions of
Theorem~\eqref{eq:bm_thm7} with $L=1/\gamma$.

Let $s(u,v)$ be a pairwise score and define the function class
\[
\mathcal F := \{(u,v)\mapsto y_{uv}\,s(u,v)\}.
\]
Applying Theorem~\eqref{eq:bm_thm7} to $f(u,v)=s(u,v)$ and $\phi=\phi_\gamma$
yields, with probability at least $1-\delta$,
\begin{equation}
\label{eq:surrogate_bound}
\small
\mathcal R_{0\text{-}1}(s)
\;\le\;
\frac1n\sum_{i=1}^n \phi_\gamma\big(y_{u_i v_i} s(u_i,v_i)\big)
\;+\;
\frac{2}{\gamma}\,\mathfrak R_n(\mathcal F)
\;+\;
\sqrt{\frac{\log(2/\delta)}{2n}},
\end{equation}
where $\mathcal R_{0\text{-}1}(s)=\mathbb P\big(y_{uv}s(u,v)\le 0\big)$ denotes the generalization error (population $0$--$1$ risk) of the pairwise classifier.

For each sample $(u_i,v_i)$, we have \(
\phi_\gamma\big(y_{u_i v_i} s(u_i,v_i)\big)
\;\ge\;
\mathbf 1\big(y_{u_i v_i} s(u_i,v_i)\le \gamma\big),
\)
since $\phi_\gamma$ is nonzero only when the functional margin does not exceed $\gamma$.
Averaging over samples yields

\begin{equation}
\small
\frac{1}{n}\sum_{i=1}^n \phi_\gamma\!\big(y_{u_i v_i} s(u_i,v_i)\big)
\le
\frac{1}{n}\sum_{i=1}^n
\mathbf{1}\!\big(y_{u_i v_i} s(u_i,v_i)\le \gamma\big)
= \widehat{\mathcal R}_\gamma(s).
\end{equation}

Substituting this inequality into \eqref{eq:surrogate_bound} yields
\begin{equation}
\label{eq:gen_margin_bound}
\mathcal R_{0\text{-}1}(s)
\;\le\;
\widehat{\mathcal R}_\gamma(s)
\;+\;
\frac{2}{\gamma}\,\mathfrak R_n(\mathcal F)
\;+\;
\sqrt{\frac{\log(2/\delta)}{2n}}.
\end{equation}

By Lemma~\ref{lem:sep_to_margin}, the maximal achievable pairwise functional margin satisfies
\begin{equation}
\label{eq:gamma_hat}
\hat\gamma
\;:=\;
\sup_{r\in\mathbb R}\min_{(u,v)} y_{uv}s(u,v)
\;=\;
\frac{d^- - d^+}{2t}
\;=\;
\frac{\Delta}{2t}.
\end{equation}
The inner minimum gives the tightest margin guarantee for a fixed $r$, while the outer supremum optimizes over $r$ to obtain the largest achievable functional margin.

For near-tree structured data, hyperbolic metrics admit low-distortion  representations of hierarchical distances with near-isometric guarantees\cite{distortion_lilun,low_distortion_lilun}, whereas theorem \textbf{4.1} show that Euclidean representations suffer from exponential crowding with depth, which forces negative pairs to become progressively closer in the ambient space.  Consequently, there exist 
\begin{equation}
\label{eq:delta_compare}
\Delta_{\mathbb R}
\;<\;
\Delta_{\mathbb H}.
\end{equation}
Combining \eqref{eq:gamma_hat} and \eqref{eq:delta_compare} yields a strict margin advantage:
\begin{equation}
\hat\gamma_{\mathbb R}
=
\frac{\Delta_{\mathbb R}}{2t}
\;<\;
\frac{\Delta_{\mathbb H}}{2t}
=
\hat\gamma_{\mathbb H}.
\end{equation}


Since the complexity penalty in \eqref{eq:gen_margin_bound} scales as $O(1/\gamma)$,
the larger margin achieved in hyperbolic space can lead to tighter
margin-based generalization guarantees than those in Euclidean space.

\end{proof}
\end{appendices}

\vfill
\bibliographystyle{IEEEtran} 
\bibliography{myref}         

 
%

\end{document}